\newcommand{\DEBUG}{0}
  \newcommand{\Snote}[1]{\textcolor{red}{[Sam: #1]}}
  \newcommand{\Inote}[1]{\textcolor{blue}{[Ittai: #1]}}
  \newcommand{\Snote}[1]{}
  \newcommand{\Inote}[1]{}
\newtheorem{theorem}{Theorem}[section]
\newtheorem{lemma}[theorem]{Lemma}
\newtheorem{proposition}[theorem]{Proposition}
\newtheorem{definition}{Definition}
\newtheorem{assumption}{Assumption}
\newtheorem{condition}{Condition}
\newcommand\wt[1]{\widetilde{#1}}
\newcommand\abs[1]{{\left\lvert{#1}\right\rvert}}
\newcommand{\Norm}[1]{\left\Vert{#1}\right\Vert}
\newcommand{\OpNorm}[1]{\left\|#1\right\|_{\mathrm{op}}}
\newcommand{\FrobNorm}[1]{\left\|#1\right\|_{\mathrm{F}}}
\newcommand{\rank}[1]{\mathop{\textup{rk}}\Paren{#1}}
\newcommand\defeq{\stackrel{\textup{def}}{=}}
\newcommand\set[1]{\left\{{#1}\right\}}
\newcommand{\ind}{\mathds{1}}
\newcommand{\RIF}{\textup{RIF}}
\newcommand{\ErrorTerm}[1]{\textbf{TODO}}
\newcommand{\iprod}[1]{\langle #1 \rangle}
\newcommand{\brac}[1]{[#1]}
\newcommand{\paren}[1]{(#1)}
\newcommand{\Brac}[1]{\left[#1\right]}
\newcommand{\Paren}[1]{\left(#1\right)}
\newcommand{\kthresh}{k_{\textup{threshold}}}
\def\vzero{{\bm{0}}}
\def\vone{{\bm{1}}}
\def\vtheta{{\bm{\theta}}}
\def\vg{{\mathbf{g}}}
\def\vw{{\mathbf{w}}}
\def\vx{{\mathbf{x}}}
\def\mA{{\mathbf{A}}}
\def\mD{{\mathbf{D}}}
\def\mG{{\mathbf{G}}}
\def\mH{{\mathbf{H}}}
\def\mI{{\mathbf{I}}}
\def\mL{{\mathbf{L}}}
\def\mM{{\mathbf{M}}}
\def\mN{{\mathbf{N}}}
\def\mR{{\mathbf{R}}}
\def\mDelta{{\bm{\Delta}}}
\newcommand\R{\mathbb{R}}
\newcommand\mcN{\mathcal{N}}
\newcommand{\IF}{\text{IF}}
\newcommand{\NS}{\text{NS}}
\title{Rescaled Influence Functions: Accurate Data Attribution in High Dimension}
\author[1]{Ittai Rubinstein}
\author[1]{Samuel B. Hopkins}
\affil[1]{EECS, MIT, Cambridge, MA}
\affil[ ]{\texttt{\{ittair, samhop\}@mit.edu}}
\begin{document}

\maketitle

\begin{abstract}
How does the training data affect a model's behavior?
This is the question we seek to answer with \emph{data attribution}.
The leading practical approaches to data attribution are based on \emph{influence functions} (IF).
IFs utilize a first-order Taylor approximation to efficiently predict the effect of removing a set of samples from the training set without retraining the model, and are used in a wide variety of machine learning applications.
However, especially in the high-dimensional regime (\# params $\geq \Omega($\# samples$)$), they are often imprecise and tend to underestimate the effect of sample removals, even for simple models such as logistic regression.
We present \emph{rescaled influence functions} (RIF) -- a tool for data attribution which can be used as a drop-in replacement for influence functions, with little computational overhead but significant improvement in accuracy.
We compare IF and RIF on a range of real-world datasets, showing that RIFs offer significantly better predictions in practice, and present a theoretical analysis explaining this improvement.
Finally, we present a simple class of data poisoning attacks that would fool IF-based detections but would be detected by RIF.
\end{abstract}

\section{Introduction}
\emph{Data attribution} aims to explain the behavior of a machine learning model in terms of its training data.
If $\theta$ is a model trained on a dataset $\set{\paren{x_i, y_i}}_{i \in [n]}$, the fundamental algorithmic task in data attribution is to answer the question:

\begin{quote}
    \begin{center}
        \emph{\textbf{Leave-$T$-Out Effect:} How would $\theta$ have been different if some subset $T \subseteq [n]$ of the training set had been missing?}
    \end{center}
\end{quote}

The ability to quickly and accurately predict a leave-$T$-out (LTO) effect, or to search for subsets producing a large leave-out effect, unlocks extensive capabilities from classical statistical inference to modern machine learning.
For example, the jackknife, leave-$T$-out cross-validation, and bootstrap are all widely used to quantify uncertainty and estimate generalization error or confidence intervals, and all rely on the ability to quickly estimate LTO effects \cite{efron1992bootstrap,giordano2019swiss,jaeckel1972infinitesimal}.
Machine learning has seen an explosion of applications of data attribution, for dataset curation \cite{Koh2017,Koh2019}, explainability \cite{Koh2019,grosse2023studying}, crafting and detection of data poisoning attacks \cite{engstrom2025optimizing,koh2022stronger,schulam2019can}, machine unlearning \cite{sekhari2021remember,guo2019certified,pmlr-v130-izzo21a}, credit attribution \cite{jia2019towards,ghorbani2019data}, bias detection \cite{pmlr-v97-brunet19a}, and more.

Ascertaining the ground truth leave-$T$-out effect in general requires a full retrain of a model for each $T$ of interest, which is computationally intractable in all but the simplest settings.
Consequently, approximations to the leave-$T$-out effect are widely used.
Key desiderata for such approximations are (1) \emph{accuracy}, (2) \emph{computational efficiency} even for large-scale models, and (3) \emph{additivity}: the predicted effect of removing $T$ should be the sum of predicted effects of removing each element of $T$ individually.
Additivity enables another important capability: \emph{search} for the subset $T$ of a given size with the greatest predicted effect according to a given metric, by taking the $k$ training data points with largest predicted leave-one-out (LOO) effects \cite{broderick2020automatic,Ilyas2022,huang2024approximations}.

\emph{Influence functions} (IF) \cite{hampel1974influence} are by far the most widely used and studied data attribution method.
The IF is a first-order approximation to the change in model parameters when infinitesimally down-weighting an individual sample.
IF approximations are well studied in classical, under-parameterized settings, where they are typically accurate and enjoy solid theoretical foundations \cite{giordano2019swiss}.
But, despite widespread adoption for data attribution in high-dimensional/overparameterized models, IF's accuracy in the high-dimensional setting is comparatively poor.
Empirical studies show that IFs often underestimate the true magnitude of parameter changes, leading to potentially misleading conclusions about data importance or model robustness \cite{Basu2021,Koh2017}.
And, existing theoretical analyses justifying IF approximations break down for overparametrized models.
But, thus far, more accurate alternatives to IFs have proved too computationally expensive to be practical.

We study a simple and fast-to-compute modification of the influence function, which we term the \emph{rescaled influence function} (RIF).
RIFs improve accuracy by incorporating a limited amount of higher-order information about the change in model parameters from sample removal, but retain the additivity and in many settings also the computational efficiency of IFs.
We show via experiments and theoretical analysis that RIFs are accurate for data attribution in overparameterized models where IFs struggle.
Like IFs, RIFs are model and task agnostic, meaning that they can be applied to any empirical risk minimization-based training method with smooth losses, and they can estimate the leave-$T$-out effect according to any (smooth) measure of change to model parameters.
We therefore advocate using RIFs as a drop-in replacement for IFs across data attribution applications.

\paragraph{Organization}
In Section~\ref{sec:introduce-rif}, we introduce RIFs formally.
Section~\ref{sec:results} presents our experimental results, and Section~\ref{sec:theory} presents our theoretical analysis of RIF.
We discuss context and conclusions in Sections~\ref{sec:related-work} and~\ref{sec:discussion} 

\subsection{Influence Functions, Newton Steps, and Rescaled Influence Functions}
\label{sec:introduce-rif}
We now introduce the rescaled influence function formally.
Suppose that $\{(x_i,y_i)\}_{i \in [n]}$ is a training data set, $\Theta \subseteq \R^d$ is a class of models, and $\ell(x,y,\theta)$ is a twice-differentiable loss function; $\ell$ may include a regularizer.
For simplicity, we imagine that $\ell$ is convex, although the definition of RIFs can be extended to the non-convex case.
Let $\hat{\theta} = \arg \min_{\theta \in \Theta} \sum_{i \leq n} \ell(x_i,y_i,\theta)$ be the empirical loss minimizer (or, in the non-convex setting, any local minimum of the empirical loss).

\paragraph{Influence Functions}
The influence function $\IF_i \in \R^d$ associated to the $i$-th training sample is a first-order estimate of the effect of dropping that sample.\footnote{Some treatments replace dropping with up-weighting, with a resulting difference of sign compared to our convention.}
Introducing a weight $w_i \in [0,1]$ associated to each sample $i$ and allowing $\hat{\theta}$ to depend on $w$ via $\hat{\theta}(w) = \arg \min_{\theta \in \Theta} \sum_{i \leq n} w_i \cdot \ell(x_i,y_i,\theta)$,
\[
\IF_i = -\Brac{\frac d {d w_i} \cdot \hat{\theta}(w)} \Big |_{w = \mathbf{1}} =  H^{-1} \cdot \nabla \ell(x_i,y_i,\hat{\theta}) \, .
\]
Here, $H$ is the Hessian of $\sum_{i \leq n} \ell(x_i,y_i,\theta)$ evaluated at $\hat{\theta}$ (see e.g., \cite{rousseeuw1986robust} for a derivation).
For $T \subseteq [n]$, the IF estimate of the leave-$T$-out model is
\[
  \hat{\theta}_{\IF,T} = \hat{\theta} + \sum_{i \in T} \IF_i \, .
\]
We can obtain all the single-sample $\IF$ estimates $\IF_i$ at the cost of a single Hessian inversion and $n$ gradient computations, which then suffice to obtain $\hat{\theta}_{\IF,T}$ for any $T$ via additivity.

\paragraph{Newton Steps}
IFs are additive and efficiently computable, but their accuracy suffers when $n$ and $d$ are comparable, or, worse still, if $d$ significantly exceeds $n$ as in the overparameterized setting (\cite{Koh2019}; see also Section~\ref{sec:results}).
A much more accurate approximation to the leave-$T$-out effect is given by taking a single Newton step (NS) to optimize the leave-$T$-out loss $\sum_{i \notin T} \ell(x_i,y_i,\theta)$, starting from $\hat{\theta}$.
The NS approximation to the leave-$T$-out effect is given by
\[
\hat{\theta}_{\NS,T} = \hat{\theta} - H_{[n] \setminus T}^{-1} \Paren{\sum_{i \notin T} \nabla \ell(x_i,y_i,\hat{\theta})} = \hat{\theta} + H_{[n] \setminus T}^{-1} \Paren{\sum_{i \in T} \nabla \ell(x_i,y_i,\hat{\theta})} \, .
\]
Here, $H_{[n] \setminus T}$ is the Hessian of the leave-$T$-out loss, evaluated at $\hat{\theta}$, and the second equality follows from the fact that $\theta$ is a local optimum of $\ell$.

As early as 1981, Pregibon \cite{pregibon1981logistic} observes in the context of leave-one-out estimation for logistic regression that the Newton step approximation is remarkably accurate.
At a high level this is because, unlike the IF approximation, the NS approximation takes into account the change to the Hessian from removing the samples in $T$.
For convex losses, the true leave-$T$-out effect can often be obtained by Newton iteration -- taking multiple Newton steps initialized with $\hat{\theta}$.
The only differences we expect to see between the one-step NS approximation and the result of Newton iteration would arise because the Hessian may change from its value at $\hat{\theta}$.
Thus, for problems with Lipschitz Hessians, we expect NS to be a very accurate approximation to the true leave-$T$-out effect; \cite{Koh2019} offers experimental validation of this idea for leave-$k$-out estimation in logistic regression, and some formal justification.

\paragraph{Rescaled Influence Functions}
The accuracy of the NS approximation comes at significant cost, since each fresh $T$ requires a Hessian inversion, and additivity is lost.
The RIF recovers additivity and much of the computational efficiency of IF, but retains much of the accuracy of the NS approximation.
For sample $i \in [n]$, let $\RIF_i$ be the NS approximation to the leave-$i$-out effect, given by $\RIF_i = H_{[n] \setminus \{i\}}^{-1} \cdot \nabla \ell_i(x_i,y_i,\hat{\theta})$.
Then for $T \subseteq [n]$, we define the RIF approximation to the leave-$T$-out effect to be
\[
\hat{\theta}_{\RIF,T} = \hat{\theta} + \sum_{i \in T} \RIF_i \, .
\]

RIF is additive by definition.

The computational overhead of RIF compared to IF depends in general on the cost of computing the $n$ leave-one-out Hessian inversions -- once these are obtained, no fresh Hessian inversion is needed to compute $\hat{\theta}_{\RIF,T}$ for any $T$.
RIF is especially attractive in generalized linear models and neural networks with a ReLU activation function, where $\RIF_i$ can be obtained from $\IF_i$ by multiplying by a rescale factor $(1-h_i)^{-1}$, where $h_i$ is a (generalized) leverage score associated to the $i$-th sample, which can be computed via a single matrix-vector product with $H^{-1}$.
Thus, for generalized linear models, no additional Hessian inversion is needed.
For example, in logistic regression, the formula for $\RIF_i$ uses the rescaling $(1-h_i)^{-1}$, where $h_i = \hat{y_i}(1-\hat{y_i}) \cdot x_i^\top H^{-1} x_i$; here $\hat{y_i} \in [0,1]$ is the logistic predicted label of the $i$-th sample according to $\hat{\theta}$.

Beyond generalized linear models and ReLU neural networks, whenever each sample makes a low-rank contribution the Hessian, the $n$ leave-one-out Hessian inversions can be computed quickly via the Sherman-Morrison/Woodbury formula.
In all of our experiments, the running time overhead to compute RIF is negligible (see Table~\ref{tab:timing_comparison}).

In underparameterized settings, it is reasonable to expect that removing a single sample has a negligible effect on the Hessian, and so $\IF_i \approx \RIF_i$.
But for high-dimensional or overparameterized models, a single sample removal can have a significant effect on the Hessian.
Our experiments and theory demonstrate the significant accuracy improvement of RIF compared to IF in high-dimensional and overparameterized models.

We note that the idea of summing over estimates of leave-one-out effects to estimate the leave-$T$-out effect is not new, and has been a central component of many previous data models~\cite{Ilyas2022}.
In their seminal TRAK paper, Park et al. separately consider both the idea of combining LOO effects additively~\cite{Park2023}[Definition 2.3] and the idea of using a Newton step to estimate LOO effects of a logistic regression~\cite{Park2023}[Definition 3.1] but do not explicitly combine the rescaling effect in their estimator except to note that the rescaling correction has little to no effect in their setting.

A similar approach that has been the focus of recent research is the Additive-One-Exact data model, which estimates the LTO effect by summing over the \emph{exact} LOO effects.
This data model was introduced by Kuschnig et al.~\cite{kuschnig2021hidden} and further analyzed by Huang et al.~\cite{huang2024approximations}.
Both Kuschnig et al. and Huang et al. study the accuracy of this method for identifying sets of highly influential samples in ordinary least squares (OLS) regressions.
Moreover, Huang et al. also note that because a single Newton step is equivalent to a full retrain for the case of OLS, a natural extension of the Additive-One-Exact data model is to sum over the single-Newton step attributions of the individual samples~\cite{huang2024approximations}[Appendix C.2].
But, to the best of our knowledge, no prior work offers quantitative experimental or theoretical comparisons between RIF and other data attribution methods in the high-dimensional settings where the differences we study emerge, or beyond the case of OLS where it is equivalent to the Additive-One-Exact model.\footnote{We are grateful to Tamara Broderick and Jenny Huang for making us aware of these prior works via personal communication.}

\section{Empirical Results}
\label{sec:results}
\begin{figure}[t]
  \centering
  \includegraphics[width=\linewidth]{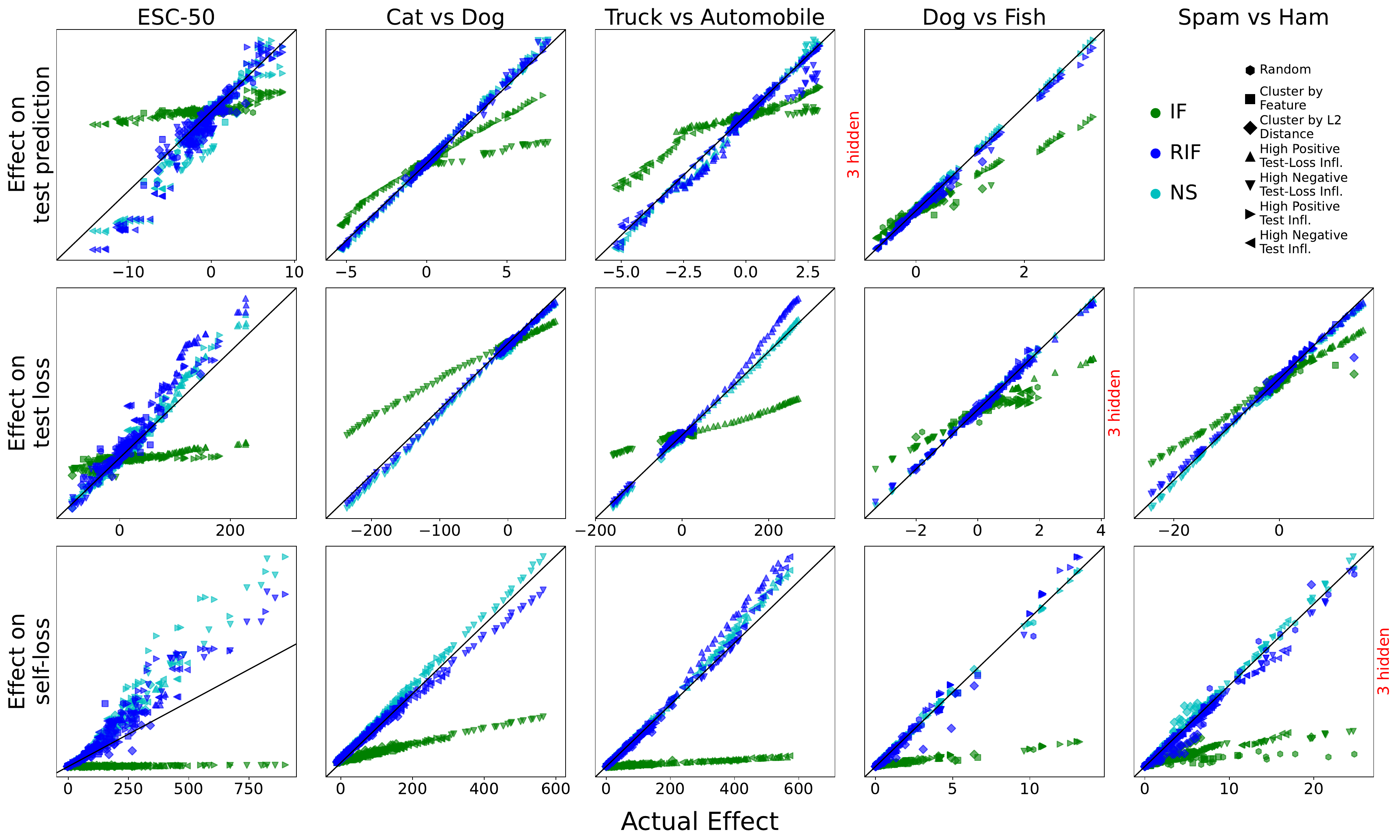}
  \caption{Accuracy of IF versus RIF compared across datasets from image classification (DogFish, Cat vs Dog, Truck vs Automobile), natural language (Spam vs Ham), and audio (ESC-50).
  In each dataset, we study a binary classification task solved via logistic regression with frozen-embedding features.
  Each point represents a single choice of subset $T$.
  The horizontal axis represents ground truth leave-$T$-out effect as measured by changes to test predictions, test losses, and self-loss, computed via refitting the logistic model.
  The vertical axis represents the prediction of this effect made by IF/RIF/NS.
  A perfectly accurate prediction falls along the black diagonal line. 
  In essentially every case, the RIF prediction falls nicely along this ``ground truth'' line, agreeing with the NS prediction, while IF typically underestimates the leave-$T$-out effect.}
  \label{fig:main-results-empirical}
\end{figure}

We now present empirical findings on the accuracy of RIF estimates for leave-$T$-out effects\footnote{An implementation of our experimental design is available at \href{https://github.com/ittai-rubinstein/rescaled-influence-functions}{github.com/ittai-rubinstein/rescaled-influence-functions}.}. Our experimental setup is inspired by the seminal work of~\cite{Koh2017,Koh2019}, who evaluate the accuracy of influence function estimates using logistic regression as a testbed. 

We compare IF, NS, and RIF estimates across the first five datasets in Table~\ref{tab:datasets}, spanning vision, NLP, and audio classification tasks. 
Each dataset is processed using a domain-specific embedding, and we train a logistic regression model to solve a binary classification task on the embedded data.
We compare the actual vs predicted effect of removing a given set of samples $T$ from the training set, while varying:

\begin{itemize}
    \item \textbf{Sample-removal strategy:} Following~\cite{Koh2019}, we evaluate both random subsets and more structured sets of training points, selected using heuristics such as clustering by a random feature or by Euclidean distance in feature space.
    
    \item \textbf{Accuracy metric:} As in~\cite{Koh2019}, we assess accuracy by comparing predicted and actual changes in three scalar quantities when a set $T$ is removed: (1) the total predicted probability for a target class over a subset of test samples, (2) the total test loss on this subset, and (3) the loss on the training set including the removed samples (``self-loss''). The test subset is selected to include a balanced mix of high-loss and randomly chosen test points.
    
    \item \textbf{Size of removed subset:} We consider values of $|T|$ ranging from $0.1\%$ to $5\%$ of the training set.
\end{itemize}

We illustrate our main findings in Figure~\ref{fig:main-results-empirical}.
Across every dataset, fraction of sample removals, and accuracy metric, we find that RIF significantly outperforms IF.
For more details on our experimental setup, see the supplemental material.

\begin{table}[ht]
\centering

\begin{tabularx}{\textwidth}{lccp{1.7cm}X}
\toprule
\textbf{Name} & \boldmath$d$ & \boldmath$n$ & 
\multicolumn{1}{p{1.7cm}}{\centering \textbf{Test} \\[-1ex] \textbf{Accuracy}} &
\textbf{Description} \\
\midrule
ESC-50 & 512 & 1600 & 83.0\% & ESC-50 dataset embedded using OpenL3; ``artificial'' vs ``natural'' classification~\cite{piczak2015dataset,cramer2019look}\\
CatDog & 2048 & 9600 & 80.9\% & ResNet-50 embeddings of CIFAR-10 cat and dog classes~\cite{krizhevsky2009learning,torchvision2016resnet50} \\
AutoTruck & 2048 & 9600  & 92.7\% & ResNet-50 embeddings of CIFAR-10 truck and automobile classes~\cite{krizhevsky2009learning,torchvision2016resnet50} \\
DogFish & 2048 & 1800 & 98.3\% & Inception v3 embeddings of dog and fish images from ImageNet~\cite{szegedy2016rethinking,russakovsky2015imagenet} \\
Enron & 3294 & 4137  & 96.1\% & Bag-of-words embeddings of the standard spam vs ham dataset~\cite{Koh2019,metsis2006spam} \\
IMDB & 512 & 40000 & 87.7\% & BERT embeddings of the IMDB sentiment dataset~\cite{maas2011learning,devlin2019bert} \\
\bottomrule
\end{tabularx}
\vspace{1em}
\caption{Summary of datasets used in our experiments. Each dataset involves a binary classification task which we solve using a regularized logistic regression with mild $L_2$ regularization. We include both datasets used in the~\cite{Koh2019} benchmark (DogFish and Enron), as well as several new datasets spanning a wide range of domains, including vision, natural language processing and audio. For more details about these datasets, see supplementary material.}
\label{tab:datasets}
\end{table}

\paragraph{Tradeoff: Dimension and Regularization}
As the number of samples $n$ decreases compared to the model dimension $d$, we expect the higher-order effect captured by RIF to be stronger.
Figure~\ref{fig:n-versus-d-empirical} shows this tradeoff, comparing the IF and RIF accuracy while varying the ratio of $n$ and $d$ by sub-sampling a fixed dataset.
A similar tradeoff appears when we add an $L_2$ regularization term of $\frac{1}{2} \lambda \|\theta\|^2$ to the loss for different values of $\lambda > 0$.
Increasing $\lambda$ dampens the higher-order effects captured by RIF -- in the limit $\lambda \rightarrow \infty$ the Hessian does not vary as samples are removed.
In Figure~\ref{fig:n-versus-d-empirical} we illustrate this tradeoff by varying $\lambda$ for a fixed dataset (DogFish), observing that IF and RIF agree for large $\lambda$ but not for small $\lambda$.

\begin{figure}[h]
  \centering
  \includegraphics[width=\linewidth]{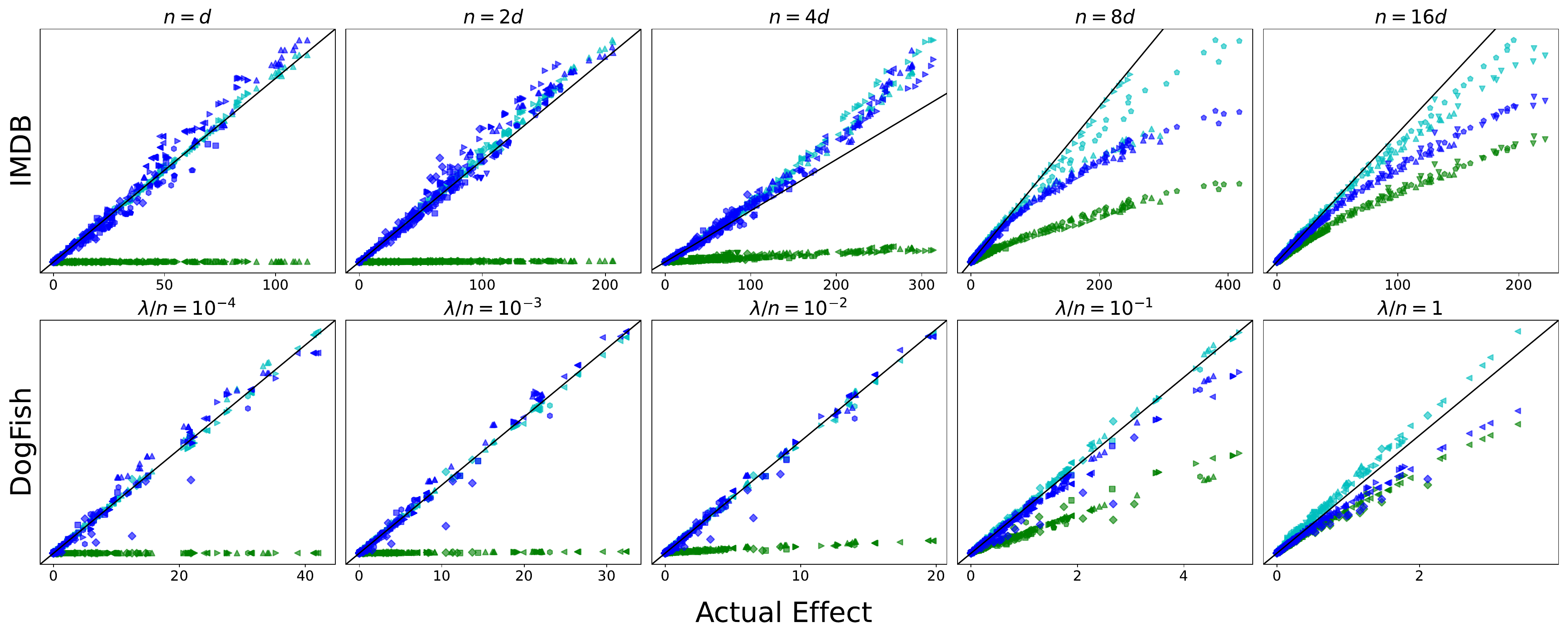}
  \caption{\emph{First row:} accuracy of IF versus RIF compared across differing ratios of $n$ and $d$, for the IMDB dataset, subsampled randomly to obtain datasets of varying sizes. IF and RIF are similar when $n \gg d$, but as $n$ decreases, RIF remains accurate while IF degrades. 
  \emph{Second row:} A similar comparison for the overparameterized DogFish dataset, where we vary the regularization strength $\lambda$.  IF becomes accurate only under strong regularization, while RIF remains robust across settings. 
  In all plots, we compare the predicted versus actual values of the self-loss metric. Blue points show the RIF estimate, green points the IF estimate, and cyan points the Newton step. Point shapes indicate different strategies for selecting training samples to remove, as in Figure~\ref{fig:main-results-empirical}.}
  \label{fig:n-versus-d-empirical}
\end{figure}

\begin{figure}[h]
  \centering
  \includegraphics[width=\linewidth]{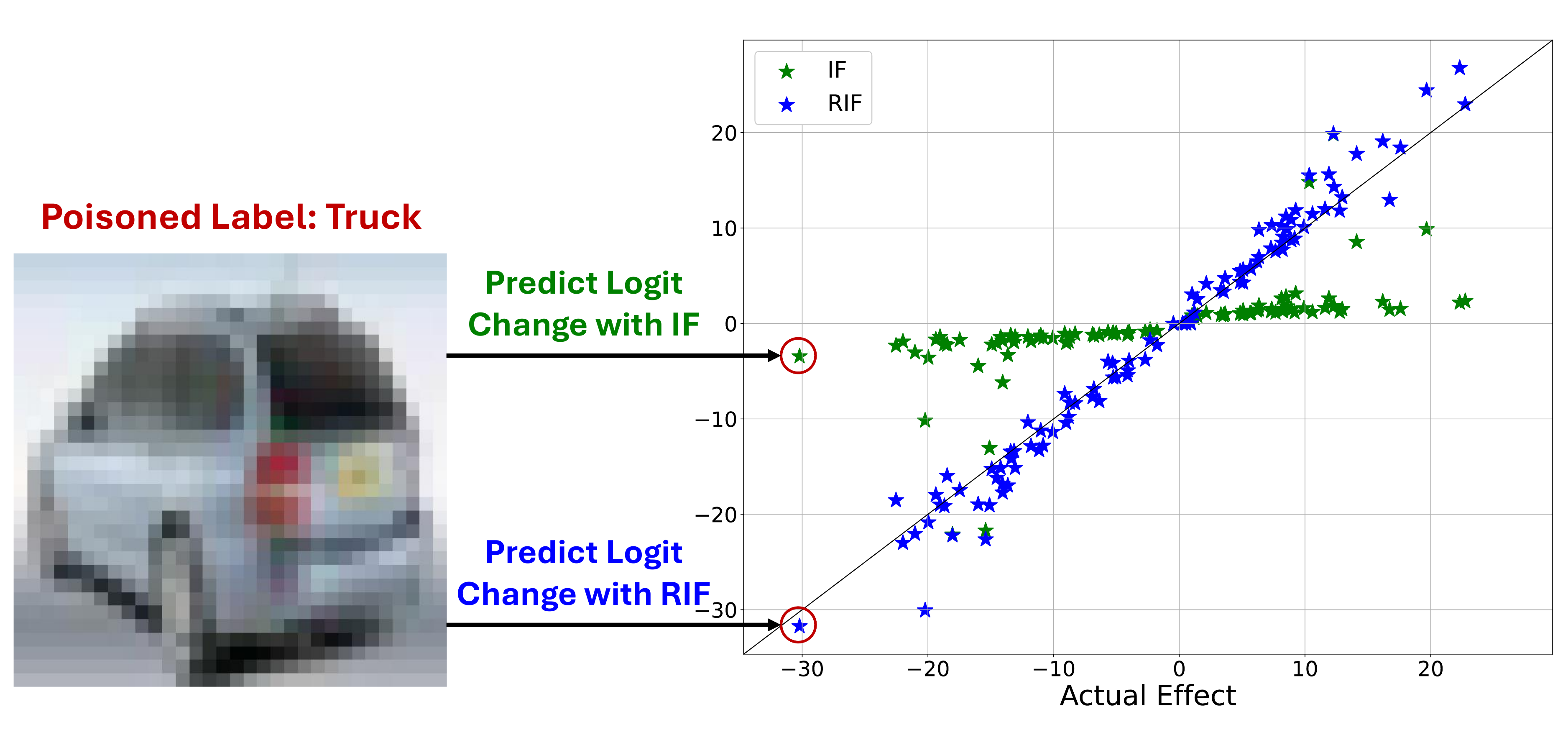}
  \caption{On the right we plot the actual vs predicted effect on a test samples logits from removing a ``poisoned'' sample from the train set using both IF and RIF. On the left we show the poisoned image corresponding to the leftmost point in the plot -- an image of an automobile mislabeled as "Truck". RIF predictions (blue) align much more closely with the actual effects, while IF predictions (green) tend to underestimate these effects.}
  \label{fig:data-poisoning}
\end{figure}

\paragraph{Detecting Data Poisonings with RIF}
One common use of additive data attributions such as influence functions is to detect potential outliers contaminating a dataset \cite{Koh2017,broderick2020automatic,Rubinstein2025,khaddaj2023rethinking}.
We conduct a simple experiment to demonstrate the advantages of RIF over IF for this task.
We take a binary image classification problem (Truck vs Automobile), add an incorrectly-labeled test sample to the training set, and train a logistic regression model on the resulting poisoned dataset.
We then compare the accuracy of IF and RIF estimates of the effect that removing the poisoned sample would have on the model's prediction for that test sample.
RIF significantly outperforms IF.
See Figure~\ref{fig:data-poisoning}.

\section{Theoretical Results}
\label{sec:theory}
We turn to a theoretical explanation of the effectiveness of RIF to estimate leave-$T$-out effects in high dimensions.
Prior work \cite{Koh2019} shows that under reasonable assumptions, the NS approximation provides a very accurate approximation of the true leave-$T$-out effect; this is also easily visible in the experiments we reproduced above.
Importantly, the NS approximation remains accurate even when the IF estimate is poor.
Motivated by this, we focus our analysis on the gap between our RIF estimate and the NS estimate.
This leads to a comparatively simple theorem statement, avoiding too many assumptions.

Our setting is as follows.
We assume that a model is trained via minimization of a convex empirical risk of the form:
\[
    \hat{\vtheta} = \arg\min_{\vtheta \in \R^d} \sum_{i=1}^n \ell_i(\vtheta)\,.
\]
We think of each $\ell_i$ as a per-sample loss from the $i$-th sample in an underlying training set, although we do not actually need to assume such a training set underlies the optimization problem.
Let \( \vg_i := \nabla \ell_i(\hat{\vtheta}) \) and \( \mH_i := \nabla^2 \ell_i(\hat{\vtheta}) \) denote the gradient and Hessian of the \( i \)th sample at the solution \( \hat{\vtheta} \), and define the total Hessian \( \mH := \sum_{i=1}^n \mH_i \).

We make the following set of assumptions on the loss functions.
Most of the assumptions are parameterized quantitatively, and our final theorem bounding the quality of the RIF approximation depends on these parameters.
Crucially, these assumptions allow for $n \approx d$ (or even $n \ll d$, if regularization is added), so that our main theorem captures how RIF remains accurate for high-dimensional barely-underparameterized or even overparameterized models.
We discuss after our main theorem statement how to interpret these assumptions quantitatively.

\begin{assumption}[Positive Semidefiniteness/Convexity]
We assume that each \( \mH_i \) is positive semidefinite, or equivalently, that $\ell_i$ is convex.
\end{assumption}

The next two assumptions are the key quantitative ones.
We offer some discussion now and more after we state our main theorem.

\begin{assumption}[No Single-Sample Gradient or Hessian Too Large]
    \label{ass:no-single-too-large}
For all \( i \in \{1, \dots, n\} \), we assume
\[
\Norm{\mH^{-1/2} \vg_i}_2 \leq C_\ell \;\;\textup{and}\;\;\OpNorm{ \mH^{-1/2} \mH_i \mH^{-1/2} } \leq 1 - \frac{1}{C_R} \, ,
\]
for some $C_\ell, C_R > 0$.
Here $\OpNorm{\cdot}$ is the operator norm/maximum singular value.
\end{assumption}

The second clause of Assumption~\ref{ass:no-single-too-large} can be rewritten as $\mH_i \preceq C_R (1-C_R^{-1}) \sum_{j \neq i} \mH_j^\top$.
This just captures that no single-sample Hessian $\mH_i$ is too much larger in any direction than the sum of all the others.
This is the key condition allowing for large dimension $d$: even if $n \approx d$, this condition can be satisfied (and indeed will be satisfied for, e.g., random low-rank $\mH_i$) without taking $C_R = \omega(1)$.

\begin{assumption}[Cross-Sample Incoherence]
For some $\varepsilon, \delta > 0$, and for all \( i \ne j \), \( \OpNorm{ \mH_i^{1/2} \mH^{-1} \mH_j^{1/2} } \leq \delta \) and \( \Norm{ \mH_i^{1/2} \mH^{-1} \vg_j }_2 \leq \varepsilon \). 
\end{assumption}

We expect $\varepsilon,\delta$ to be small because in high dimensions gradients and Hessians of distinct samples are likely to point in close-to-orthogonal directions.
We carry this intuition out in more detail below.

Ultimately, we use IF/RIF/NS to estimate the change to $f(\hat{\vtheta})$ for some \emph{evaluation function} $f$.
For instance, in our experiments, $f$ is typically test loss or a test prediction.
To show that the RIF and NS estimates are close, we require our evaluation function $f$ to have bounded gradients:

\begin{assumption}[Evaluation Gradient Projection Control]
    Let \( \nabla f(\vtheta) \) denote the gradient of an \emph{evaluation function} \( f \colon \mathbb{R}^d \to \mathbb{R} \). For all \( i \), \( \Norm{ \mH_i^{1/2} \mH^{-1} \nabla f(\hat{\vtheta}) }_2 \leq \eta \) for some $\eta > 0$.
\end{assumption}

Let \( \vw \in \brac{0, 1}^n \) be a weight change vector. 
We study the NS and RIF approximations to the optimum of the weighted loss $\sum_{i \leq n} w_i  \ell_i(\vtheta)$.
(So, to capture leave-$T$-out, we set $w_i = 1$ for $i \in T$ and otherwise $w_i = 0$.)
We define $\hat{\vtheta}_{\RIF,\vw}$ and $\hat{\vtheta}_{\NS,\vw}$ analogously to $\hat{\vtheta}_{\RIF,T}, \hat{\vtheta}_{\NS,T}$, respectively.
We are now ready to state our main theorem:

\begin{theorem}[Accuracy of Rescaled Influence Function]
\label{thm:rif-ns}
Under Assumptions 1–4, for any \( k \leq \frac{1}{2 \delta C_R} \),
\[
    | \iprod{\nabla f(\hat{\vtheta}), \hat{\vtheta}_{\NS,\vw} - \hat{\vtheta}_{\RIF,\vw} } | \leq k^2 \eta \Paren{1 + 2C_R} \Paren{\varepsilon + C_R C_\ell \delta}
\]
\end{theorem}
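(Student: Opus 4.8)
The plan is to \emph{whiten} by the total Hessian and then compare the two estimators term by term through their Neumann expansions. Write $\wt{\mH}_i := \mH^{-1/2}\mH_i\mH^{-1/2}$, $\wt{\vg}_i := \mH^{-1/2}\vg_i$, $\wt{\vf} := \mH^{-1/2}\nabla f(\hat{\vtheta})$, and $\mA_i := \mH_i^{1/2}\mH^{-1/2}$, so that $\wt{\mH}_i = \mA_i^\top\mA_i$, $\sum_i \wt{\mH}_i = I$, and the cross terms become $\mB_{ij} := \mA_i\mA_j^\top = \mH_i^{1/2}\mH^{-1}\mH_j^{1/2}$. In these coordinates the assumptions read $\Norm{\wt{\vg}_i}_2\le C_\ell$, $\OpNorm{\wt{\mH}_i}\le \rho:=1-C_R^{-1}$, $\OpNorm{\mB_{ij}}\le\delta$ for $i\ne j$, $\Norm{\mA_i\wt{\vg}_j}_2\le\varepsilon$ for $i\ne j$, and $\Norm{\mA_i\wt{\vf}}_2\le\eta$. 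Setting $\mM_{\vw}:=\sum_i w_i\wt{\mH}_i$, one checks $\hat{\vtheta}_{\NS,\vw}-\hat{\vtheta}=\mH^{-1/2}(I-\mM_{\vw})^{-1}\sum_i w_i\wt{\vg}_i$ and $\hat{\vtheta}_{\RIF,\vw}-\hat{\vtheta}=\mH^{-1/2}\sum_i w_i(I-\wt{\mH}_i)^{-1}\wt{\vg}_i$, so the quantity to bound is $\iprod{\wt{\vf},\,(I-\mM_{\vw})^{-1}\sum_i w_i\wt{\vg}_i-\sum_i w_i(I-\wt{\mH}_i)^{-1}\wt{\vg}_i}$.

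First I would expand both inverses as Neumann series. Taking $\vw$ to be the $0/1$ indicator of $T$ (the leave-$T$-out case, with $k=|T|$), the order-$0$ terms agree and the ``fully diagonal'' contributions $j_1=\cdots=j_p=i$ cancel between the two estimators, leaving exactly
\[
    \sum_{p\ge 1}\sum_i\ \sum_{\substack{(j_1,\dots,j_p)\in T^p\\ \text{not all }=i}} \iprod{\wt{\vf},\,\wt{\mH}_{j_1}\cdots\wt{\mH}_{j_p}\wt{\vg}_i}.
\]
The central observation is that each chain factors as $(\mA_{j_1}\wt{\vf})^\top\mB_{j_1 j_2}\cdots\mB_{j_{p-1}j_p}(\mA_{j_p}\wt{\vg}_i)$, so by Cauchy--Schwarz and submultiplicativity its magnitude is at most $\eta\cdot\prod_a\OpNorm{\mB_{j_a j_{a+1}}}\cdot\Norm{\mA_{j_p}\wt{\vg}_i}_2$, where each interior factor is $\le\delta$ across a transition ($j_a\ne j_{a+1}$) and $\le\rho$ otherwise. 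Every ``bad'' tuple then pays at least once: if $j_p\ne i$ the rightmost factor is $\le\varepsilon$; if $j_p=i$ but the tuple is not constant, there is a last transition into $i$, contributing a factor $\delta$ while the rightmost factor is only $\le C_\ell$.

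Next I would convert this into a scalar transfer-matrix estimate. Let $\mP\in\R^{T\times T}$ have entries $\OpNorm{\mB_{ij}}$, so $\mP$ is symmetric with $\mP_{ii}\le\rho$ and $\mP_{ij}\le\delta$. Under $k\le\tfrac{1}{2\delta C_R}$ a Gershgorin bound gives $\OpNorm{\mP}\le\rho+(k-1)\delta\le 1-\tfrac{1}{2C_R}$, hence $\OpNorm{(I-\mP)^{-1}}\le 2C_R$; the same bound applied to the block matrix $\big[\mB_{ij}\big]_{i,j\in T}$ shows $\OpNorm{\mM_{\vw}}\le\OpNorm{\mP}<1$, which justifies the Neumann expansion. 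Splitting the bad tuples by the two cases above and summing the geometric series --- over chain length via $(I-\mP)^{-1}$, and over the trailing run of $i$'s via $\sum_{m\ge 0}\rho^m=C_R$ --- bounds the ``$\varepsilon$-type'' terms by $\eta\varepsilon\cdot\mathbf{1}^\top(I-\mP)^{-1}\mathbf{1}\le 2C_R k\,\eta\varepsilon$ per index $i$, and the ``$\delta$-type'' terms by $\eta\,\delta C_\ell C_R\cdot\mathbf{1}^\top(I-\mP)^{-1}\mathbf{1}\le 2C_R^2 k\,\eta C_\ell\delta$ per index $i$. Summing over the $k$ choices of $i$ and recombining yields $2C_R k^2\eta(\varepsilon+C_R C_\ell\delta)$, inside the stated $k^2\eta(1+2C_R)(\varepsilon+C_R C_\ell\delta)$; pulling the order-$1$, $j_1\ne i$ terms out separately accounts for the extra ``$1$'' in $1+2C_R$.

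I expect the main obstacle to be the combinatorial bookkeeping of these chains: one must decompose each non-constant tuple \emph{uniquely} (here, by its last transition into the terminal index $i$) so that every bad term is charged exactly one $\delta$ or $\varepsilon$ and nothing is double counted, and then verify that both geometric series --- in the chain length and in the length of the trailing diagonal block --- converge with the claimed $C_R$ factors under the single hypothesis $k\delta C_R\le\tfrac12$. A secondary point to handle carefully is that the clean diagonal cancellation uses $w_i\in\{0,1\}$ (so that $w_i^{p+1}=w_i$); for genuinely fractional weights one incurs an extra diagonal remainder $\sum_i w_i(1-w_i^p)\iprod{\wt{\vf},\wt{\mH}_i^p\wt{\vg}_i}$ that would have to be estimated separately.
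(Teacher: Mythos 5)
Your proposal is correct and arrives at the stated bound (in fact at the marginally sharper constant $2C_R$ in place of $1+2C_R$), but it takes a genuinely different route from the paper's. The paper does not perform a full Neumann expansion: it applies the resolvent identity $(A-B)^{-1}=A^{-1}+(A-B)^{-1}BA^{-1}$ twice to split each per-sample error $(\nabla f)^\top(\mH_\vw^{-1}-\mH_{\vw^{(i)}}^{-1})\vg_i$ into a first-order correction --- bounded by $k\eta(\varepsilon+C_R C_\ell\delta)$ by exactly the ``every bad term pays one $\varepsilon$ or one $\delta$'' accounting you describe --- plus a single higher-order remainder, dispatched in one Cauchy--Schwarz step using $\OpNorm{(I-\mG_\vw)^{-1}}\le 2C_R$. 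That operator-norm bound is the paper's only nontrivial lemma (Lemma~\ref{lem:sum-psd-h-inv}), proved there by a fairly heavy tensor-contraction/power-trick argument; your block-Gershgorin bound $\OpNorm{\mM_\vw}\le\OpNorm{\mP}\le(1-C_R^{-1})+(k-1)\delta$, obtained via $\OpNorm{\mA\mA^\top}\le\OpNorm{[\,\OpNorm{\mB_{ij}}\,]}$, is a cleaner and slightly sharper way to reach the same conclusion. What your explicit chain expansion buys is transparency: each factor of $\delta$, $\varepsilon$, $C_\ell$, $C_R$ is attached to a specific transition of a specific tuple, and the ``last transition into $i$'' decomposition you flag as the main obstacle does give a unique classification with no double counting, with both geometric series converging under $k\delta C_R\le\tfrac12$ exactly as you claim. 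One remark on your final caveat: the paper's appendix defines the per-sample inverse as $\mH_{\vw^{(i)}}^{-1}=(\mH-w_i\mH_i)^{-1}$, i.e.\ $(I-w_i\wt{\mH}_i)^{-1}$ in your whitened coordinates, and under that definition the diagonal chains cancel exactly for all $w_i\in[0,1]$, so the extra remainder you anticipate for fractional weights disappears once you match that convention; only the weighted transfer-matrix bookkeeping with $k=\Norm{\vw}_1$ remains to be checked.
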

The proof of Theorem~\ref{thm:rif-ns} proceeds via a matrix-perturbation analysis which shows that the Hessian inversion in the NS approximation can itself be approximated well without considering the contributions to the inverse from $\nabla^2 \ell_i$'s interaction with $\nabla^2 \ell_j$ when $i \neq j$.
We defer the proof to supplemental material, and focus instead on interpreting Theorem~\ref{thm:rif-ns}, to illustrate how it captures the improvement of RIF compared to IF.

\paragraph{Interpreting Assumptions and Theorem~\ref{thm:rif-ns}}
Prior works \cite{giordano2019swiss,Koh2019} prove similar-in-spirit results to Theorem~\ref{thm:rif-ns}, but concerning IF rather than RIF.
A direct comparison of Theorem~\ref{thm:rif-ns} to those results in prior work is challenging, as each result is derived under different assumptions.
So, to better understand the practical significance of our bounds compared to those in prior work, and see why they capture the accuracy of RIF for overparameterized models, we analyze their asymptotic behavior in a simplified setting.
Since this is for illustration purposes only, we keep the analysis informal.

Consider linear regression with square loss (ordinary least squares), where the data vectors are drawn i.i.d. from a standard Gaussian distribution, \( \vx_i \sim \mcN(\vzero, \mI) \).
And suppose $n \geq (1+\Omega(1))d$, i.e., $n$ and $d$ are comparable.
In this case, we know that:
\begin{itemize}
    \item Each individual Hessian contribution \( \mH_i = x_i x_i^\top \) is low rank with \( \rank{\mH_i} = 1 \) and \( \OpNorm{\mH_i} = O(d) \),
    \item The total Hessian is approximately isotropic: \( \mH \approx n \mI \),
    \item Gradient vectors are bounded in norm: \( \Norm{\vg_i}_2 \approx \sqrt{d} \).
\end{itemize}

We can apply the heuristic that random vectors $u,v \in \R^d$ are likely to have $|\iprod{u,v}| \approx \|u\| \|v\| / \sqrt{d}$, and so long as $n \geq (1+ \Omega(1))d$, we expect the key variables in Theorem~\ref{thm:rif-ns} to scale as:
\begin{itemize}
    \item \( C_\ell := \max_{i \in [n]} \Norm{\mH^{-1/2} \vg_i}_2 \approx \frac{\sqrt{d}}{\sqrt{n}} = O(1) \),
    \item \( C_R := \max_{i \in [n]} \frac{1}{1 - \OpNorm{\mH^{-1/2} \mH_i \mH^{-1/2}}} \approx \frac{n}{n - d} = O(1) \),
    \item \( \delta := \max_{i \ne j} \OpNorm{\mH_i^{1/2} \mH^{-1} \mH_j^{1/2}} = \wt{O}\Paren{ \frac{\sqrt{d}}{n} } \),
    \item \( \varepsilon := \max_{i \ne j} \Norm{\mH_i^{1/2} \mH^{-1} \vg_j}_2 = \wt{O}\Paren{ \frac{\sqrt{d}}{n} } \),
    \item \( \eta := \max_{i \in [n]} \Norm{\mH_i^{1/2} \mH^{-1} \nabla_\vtheta f}_2 = \max_{i \in [n]} \abs{\vx_i^{\intercal} \mH^{-1} \nabla_\vtheta f} = \wt{O}\Paren{ \frac{\Norm{ \nabla_\vtheta f }_2}{n} } \).
\end{itemize}

Under these conditions, Theorem~\ref{thm:rif-ns} guarantees that for any set of at most \( k \leq \kthresh = \wt{\Omega}\Paren{ \frac{n}{\sqrt{d}} } \) removed samples, the discrepancy between the RIF and Newton step estimates is bounded by:
\[
| \iprod{\nabla f(\hat{\vtheta}), \hat{\vtheta}_{\NS,\vw} - \hat{\vtheta}_{\RIF,\vw} } | \leq k^2 \eta \Paren{1 + 2C_R} \Paren{ \varepsilon + C_R C_\ell \delta } = \wt{O}\Paren{ \frac{k^2 \sqrt{d} \Norm{ \nabla_\vtheta f }_2 }{n^2} }.
\]

The scaling rate $n^{-2}$ in the denominator matches what we expect for influence functions, as established in \cite{giordano2019swiss}.
But influence function approximations incur \emph{significantly worse} dimension dependence in the numerator, meaning that $n$ must be much larger than $d$ (indeed, quadratic in $d$ or even larger) to obtain nontrivial guarantees.
For comparison, in supplemental material, we analyze the bounds proved by~\cite{giordano2019swiss,Koh2019} for influence functions to the same random-design ordinary-least-squares setting and show that they guarantee influence function accuracy only for much larger $n$ or smaller $d$.
For example, the bounds of \cite{giordano2019swiss} are only applicable for \( k \leq \wt{O}\Paren{ \frac{n}{d^2} } \), and yield an error bound that scales as \( \wt{O}\Paren{ \frac{k^2 d^4 \Norm{\nabla_\vtheta f}_2}{n^2} } \).

Finally, to assess the tightness of our result relative to the RIF magnitude itself, we note that under the same random-design least-squares setup and the same heuristics about inner products of high-dimensional random vectors, the RIF estimate for the removal of the top-\( k \) most influential samples scales as
\[
\max \set{ | \iprod{\nabla f(\hat{\vtheta}), \hat{\vtheta}_{\RIF,\vw} } | \, : \, \|\vw\|_1 = k  } = \Omega \Paren{ \frac{k \Norm{ \nabla_\vtheta f }_2 }{n} }.
\]
Hence, the ratio of the RIF estimate ("signal") to the RIF–NS error ("noise") is
\[
\textup{SNR} := \frac{ \max \set{ | \iprod{\nabla f(\hat{\vtheta}), \hat{\vtheta}_{\RIF,\vw} } | \, : \, \|\vw\|_1 = k  } }{\max \set{ | \iprod{\nabla f(\hat{\vtheta}),\hat{\vtheta}_{\NS,\vw} - \hat{\vtheta}_{\RIF,\vw}  } | \, : \, \|\vw\|_1 = k  } } = \wt{\Omega} \Paren{ \frac{n}{k \sqrt{d}} }.
\]
This implies that RIF provides a good relative-error approximation to NS even in high dimensions, provided \( k \ll \frac{n}{\sqrt{d}} \).

\section{Related Work}
\label{sec:related-work}

Influence functions were introduced by Hampel in the context of robust statistics \cite{hampel1974influence}, and in the context of estimation of standard errors via the \emph{infinitesimal jackknife} by Jaeckel \cite{jaeckel1972infinitesimal}, with a broad ensuing literature in statistics; see e.g., \cite{law1986robust,giordano2019swiss}.
Recent work in econometrics \cite{broderick2020automatic} uses influence functions to uncover robustness issues in large empirical studies.

The seminal work \cite{Koh2017} introduced the modern use of influence functions to study the relationship between training data and model behavior in modern machine learning.
Ensuing works \cite{Bae2022,Basu2021,grosse2023studying,feldman2020neural} study influence functions for neural networks, and use them as a tool to study and interpret model behavior.
\cite{giordano2019higher,Basu2020} propose second and higher-order approximations to leave-one-out and leave-$T$-out effects, but these approximations sacrifice linearity and efficiency.
Many applications of influence functions have appeared recently, e.g., machine unlearning \cite{guo2019certified,sekhari2021remember,suriyakumar2022algorithms}, data valuation \cite{jia2019towards}, robustness quantification \cite{schulam2019can}, and fairness \cite{li2022achieving}.
To scale influence functions up to very large models and datasets, where Hessian inversion becomes infeasible, several works develop sketching/random projection techniques to approximate influence functions, e.g., \cite{wojnowicz2016influence,park2023trak,schioppa2022scaling}.

Data attribution -- tracing model behavior back to subsets of training data -- has become a major industry in machine learning; see the recent survey \cite{hammoudeh2024training} and extensive citations therein, as well as the NeurIPS 2024 workshop \cite{attrib2024} and ICML 2024 tutorial \cite{madry2024data}.

Newton-step approximations to the leave-1-out error have been studied since at least 1981 \cite{pregibon1981logistic}.
\emph{Cross-validation} is an especially important application \cite{rad2018scalable,wilson2020approximate}.
Additionally, several recent works consider data models that additively combine estimates of leave-one-out effects to compute a leave-$T$-out effect~\cite{kuschnig2021hidden,Ilyas2022,Park2023,huang2024approximations}.
However, to the best of our knowledge no previous work provides an empirical or theoretical evaluation of the RIF method beyond low-dimensional least-squares regression.

\section{Discussion and Conclusion}
\label{sec:discussion}

\paragraph{IFs and Importance-Ordering: Revisiting the Common Wisdom}
Common wisdom regarding IF approximations to leave-$T$-out effects for high-dimensional models holds that the approximations typically \emph{underestimate} the true leave-$T$-out effect, but that there is a strong correlation between the influence-function approximation to the leave-$T$-out effects and the true leave-$T$-out effects, especially measured in terms of the \emph{ordering} of subsets based on their predicted/actual leave-$T$-out effect.
The seminal \cite{Koh2019} even phrases this as an outstanding open question, writing that their work ``opens up the intriguing question of why we observe [correlation and underestimation] across a wide range of empirical settings''.

Our work sheds significant light on this question.
First of all, it explains why we see such correlation in a great many cases -- if most samples have a similar ``rescale factor'' relating IF and RIF (which we would expect to happen for e.g., random data), this induces a linear relationship between RIF and IF estimates.
Since RIF is an excellent approximation to the true leave-$T$-out effect, this explains the correlation between IF and the ground truth, and explains why IF typically underestimates the truth -- the rescale factors are always larger than $1$.

\cite{Koh2019} also note that this IF/ground-truth correlation phenomenon need not be universal, and indeed we observe several experiments where it does not hold.
For instance, in the first row of Figure~\ref{fig:main-results-empirical}, in the Cat vs Dog dataset, we see a dramatically non-linear and even non-monotone relationship between IF and ground truth, since different subset-selection strategies yield very different relationships between IF and ground truth.
Even the ordering of subsets by IF-predicted effect is not accurate in this example, but RIF remains accurate.

\paragraph{Limitations}
Although much more accurate than IFs, RIFs are still imperfect predictors of ground-truth -- see e.g., the ESC-50 dataset in Figure~\ref{fig:main-results-empirical} or the rightmost variants of the IMBD dataset in Figure~\ref{fig:n-versus-d-empirical}.
We expect high-dimensional logistic regression to be a good ``model organism'' for high-dimensional machine learning, so our experiments are limited to that setting.
RIF also still requires inverting the Hessian; as discussed in related work for very large-scale models this can be computationally infeasible, and approximate techniques are required.
While we show that RIFs are preferable to IFs for detecting certain simple data-poisoning attacks, we do not expect that RIFs are a secure general defense against data poisoning.

\paragraph{Conclusion}
We show that RIFs are an appealing drop-in replacement for IFs, with little computational overhead in generalized linear models (or whenever individual training samples contribute low-rank terms to the Hessian), but dramatically improved accuracy.
Both experiments and theory support this conclusion.
Furthermore, the fact that RIFs and IFs differ by a per-sample scaling factor helps to resolve an open question from prior work, showing that the correlation between IF and ground truth leave-$T$-out occurs when the per-sample scalings all (approximately) agree.

\section*{Compute Resources}
All experiments were conducted on a server equipped with 64GB RAM, 2 IBM POWER9 CPU cores, and 4 NVIDIA Tesla V100 SXM2 GPUs (each with 32GB memory).

Table~\ref{tab:timing_comparison} details the computational cost of training the base models and computing their IF and RIF data attribution.
Another major computational overhead was in retraining the model to obtain ground-truth values for the retrain effect.
Despite this, compute resources were not a bottleneck for our work. The total wall-clock time for all experiments reported in the paper was under 100 hours.

\begin{table}[ht]
\centering
\begin{tabular}{|l|r|r|r|r|r|}
\hline
\textbf{Dataset} & \textbf{Training} & \textbf{Hessian} & \textbf{Inversion} & \textbf{Influence} & \textbf{Rescaling} \\
\hline
ESC50        & 1.8 s   & 0.056 s & 0.0005 s  & 0.051 s  & 0.0033 s (0.2\%)  \\
CatDog       & 76 s    & 4.9 s   & 0.010 s   & 4.8 s    & 0.087 s (0.1\%)   \\
AutoTruck    & 48 s    & 4.9 s   & 0.0094 s  & 4.8 s    & 0.087 s (0.2\%)   \\
DogFish      & 0.43 s  & 0.92 s  & 0.0095 s  & 0.89 s   & 0.015 s (0.7\%)   \\
Enron        & 6.7 s   & 15 s    & 0.065 s   & 15 s     & 0.095 s (0.3\%)   \\
IMDB (n=16d) & 20 s    & 0.92 s  & 0.0012 s  & 0.87 s   & 0.044 s (0.2\%)   \\
\hline
\end{tabular}
\caption{Comparison of runtime components across datasets. The \textbf{Rescaling} step consistently added negligible overhead across all experiments.}
\label{tab:timing_comparison}
\end{table}

\section*{Acknowledgments}
We would like to thank Jenny Y. Huang, David R. Burt, Yunyi Shen, Tin D. Nguyen, Vishwak Srinivasan, and Tamara Broderick for helpful conversations.
This work was supported by NSF Award No. 2238080 and CSAIL Alliances.

\bibliographystyle{alpha}
\bibliography{refs} 

\appendix

\section{Proof of Theorem~\ref{thm:rif-ns}}

\label{sec:proof_of_rif_ns}

Recall our main theoretical result from Section~\ref{sec:theory}:

\begin{theorem}[Theorem~\ref{thm:rif-ns} (restated)]
Under Assumptions 1–4, for any \( k \leq \frac{1}{2 \delta C_R} \),
\[
    | \iprod{\nabla f(\hat{\vtheta}), \hat{\vtheta}_{\NS,\vw} - \hat{\vtheta}_{\RIF,\vw} } | \leq k^2 \eta \Paren{1 + 2C_R} \Paren{\varepsilon + C_R C_\ell \delta}
\]
\end{theorem}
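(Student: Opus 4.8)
The plan is to whiten the problem by $\mH^{-1/2}$ and reduce the RIF--NS gap to a sum over \emph{pairs of distinct} removed samples, where cross-sample incoherence can be exploited. Set $\wt{\mH}_i := \mH^{-1/2}\mH_i\mH^{-1/2}$, $\wt{\vg}_i := \mH^{-1/2}\vg_i$, and $\vy := \mH^{-1/2}\nabla f(\hat{\vtheta})$, so that $\sum_i \wt{\mH}_i = \mI$, $\Norm{\wt{\vg}_i}_2\le C_\ell$, and $\OpNorm{\wt{\mH}_i}\le 1-1/C_R$ by Assumption~\ref{ass:no-single-too-large}. I focus on the leave-$T$-out case $w_i\in\{0,1\}$ with $T$ the support of $\vw$ and $|T|=k$; fractional weights are handled the same way. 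Writing $\mR_T := (\mI - \sum_{i\in T}\wt{\mH}_i)^{-1}$ and $\mR_i := (\mI-\wt{\mH}_i)^{-1}$, one has $\iprod{\nabla f(\hat{\vtheta}),\, \hat{\vtheta}_{\NS,\vw}-\hat{\vtheta}_{\RIF,\vw}} = \iprod{\vy,\ \mR_T\sum_{i\in T}\wt{\vg}_i - \sum_{i\in T}\mR_i\wt{\vg}_i}$, and the resolvent identity $\mR_T-\mR_i = \mR_T(\sum_{l\in T}\wt{\mH}_l - \wt{\mH}_i)\mR_i$ cancels each sample's self-correction, leaving only off-diagonal interactions:
\[
\hat{\vtheta}_{\NS,\vw}-\hat{\vtheta}_{\RIF,\vw} = \mH^{-1/2}\sum_{i\in T}\sum_{j\in T,\, j\ne i}\mR_T\,\wt{\mH}_j\,\mR_i\,\wt{\vg}_i .
\]
This is the conceptual core: RIF already encodes each sample's own effect on the Hessian, so the gap to NS is driven purely by cross-sample ($i\ne j$) terms, which incoherence forces to be small.

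Next I would show $\mR_T$ is well-conditioned (which also certifies that the NS step is well-defined). Factoring $\wt{\mH}_i = \mB_i^\top\mB_i$ with $\mB_i := \mH_i^{1/2}\mH^{-1/2}$, the matrix $\sum_{i\in T}\wt{\mH}_i$ shares its nonzero spectrum with the symmetric block Gram matrix whose $(i,j)$ block is $\mB_i\mB_j^\top = \mH_i^{1/2}\mH^{-1}\mH_j^{1/2}$. A block Schur test (operator norm at most the maximum block-row-sum of block operator norms) bounds this by the diagonal contribution $1-1/C_R$ plus the off-diagonal contribution $(k-1)\delta$ (Assumption~3). The hypothesis $k\le \frac{1}{2\delta C_R}$ then gives $\OpNorm{\sum_{i\in T}\wt{\mH}_i}\le 1-\frac{1}{2C_R}$, so $\mR_T$ exists with $\OpNorm{\mR_T}\le 2C_R$; likewise $\OpNorm{\mR_i}\le C_R$ and $\OpNorm{(\mI-\mB_i\mB_i^\top)^{-1}}\le C_R$.

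Each of the $k(k-1)\le k^2$ summands can be rewritten, using symmetry of $\mR_T$, as $\iprod{\mB_j\mR_T\vy,\ \mB_j\mR_i\wt{\vg}_i}$, to which I apply Cauchy--Schwarz. The gradient factor is short-range: the push-through identity $\mR_i\mB_i^\top = \mB_i^\top(\mI-\mB_i\mB_i^\top)^{-1}$ yields $\mB_j\mR_i\wt{\vg}_i = \mB_j\wt{\vg}_i + \mB_j\mB_i^\top(\mI-\mB_i\mB_i^\top)^{-1}\mB_i\wt{\vg}_i$, and since $\Norm{\mB_j\wt{\vg}_i}_2=\Norm{\mH_j^{1/2}\mH^{-1}\vg_i}_2\le\varepsilon$, $\OpNorm{\mB_j\mB_i^\top}\le\delta$ (both for $j\ne i$, Assumption~3), and $\Norm{\mB_i\wt{\vg}_i}_2\le C_\ell$, this factor is at most $\varepsilon + C_R C_\ell\delta$ --- exactly the second bracket of the theorem.

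The main obstacle is the evaluation factor $a_j := \Norm{\mB_j\mR_T\vy}_2$, because $\mR_T$ entangles all $k$ removed samples simultaneously: crudely bounding $\OpNorm{\mR_T}\le 2C_R$ and expanding over $T$ loses an extra factor of $k$, producing $k^3$ rather than $k^2$. The remedy is a self-bounding argument. Writing $\vu := \mR_T\vy$ and using $\vu = \vy + \sum_{l\in T}\mB_l^\top(\mB_l\vu)$, together with $\Norm{\mB_j\vy}_2=\Norm{\mH_j^{1/2}\mH^{-1}\nabla f(\hat{\vtheta})}_2\le\eta$ (Assumption~4), $\OpNorm{\mB_j\mB_j^\top}\le 1-1/C_R$, and $\OpNorm{\mB_j\mB_l^\top}\le\delta$ for $l\ne j$, I obtain the coupled inequalities
\[
a_j \le \eta + \Paren{1-\frac{1}{C_R}}a_j + \delta\sum_{l\in T,\, l\ne j}a_l .
\]
Taking the maximum over $j\in T$ and invoking $k\delta\le \frac{1}{2C_R}$ once more closes the recursion, giving $\max_j a_j\le 2C_R\eta\le(1+2C_R)\eta$. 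Multiplying the two factor bounds and summing over the at most $k^2$ ordered pairs $(i,j)$ yields $\abs{\iprod{\nabla f(\hat{\vtheta}),\,\hat{\vtheta}_{\NS,\vw}-\hat{\vtheta}_{\RIF,\vw}}}\le k^2\eta(1+2C_R)(\varepsilon+C_R C_\ell\delta)$, as claimed.
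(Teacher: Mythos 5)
Your proof is correct, and it takes a genuinely different route from the paper's. The paper argues sample-by-sample: for each $i$ it compares $\mH_\vw^{-1}$ to $\mH_{\vw^{(i)}}^{-1}$ by applying the identity $(A-B)^{-1}=A^{-1}+(A-B)^{-1}BA^{-1}$ twice, splitting the gap into a ``first-order'' and a ``higher-order'' correction, and it controls $\OpNorm{(\mI-\mG_\vw)^{-1}}$ via a tensor-contraction lemma (Lemma~\ref{lem:sum-psd-h-inv}). You instead apply the resolvent identity once, exactly, which collapses the entire NS--RIF gap into the single double sum $\sum_{i\in T}\sum_{j\ne i}\mR_T\wt{\mH}_j\mR_i\wt{\vg}_i$ over ordered pairs of distinct removed samples; your block-Gram/Schur-test bound on $\OpNorm{\sum_{i\in T}\wt{\mH}_i}$ plays the role of Lemma~\ref{lem:sum-psd-h-inv} and is more elementary. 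The most substantive difference is your handling of the evaluation factor $\Norm{\mB_j\mR_T\vy}_2$. In the paper's higher-order term, the left multiplicand is bounded by $\eta k$ and the right multiplicand by $k(\varepsilon+C_R C_\ell\delta)$, so the displayed per-sample bound is really $2C_R\eta k^2(\varepsilon+C_R C_\ell\delta)$, which after the final weighted sum over $i$ would give a $k^3$ dependence; the paper's summary records only $\eta k\cdot 2C_R\cdot(\varepsilon+C_R C_\ell\delta)$, silently dropping a factor of $k$. Your self-bounding recursion $a_j\le\eta+(1-1/C_R)a_j+\delta\sum_{l\ne j}a_l$, closed using $k\delta\le 1/(2C_R)$ to get $\max_j a_j\le 2C_R\eta$, is precisely the extra idea needed to legitimately land on the stated $k^2$ bound --- so your argument is not merely an alternative but arguably a repair of the paper's. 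The only loose end is the one-line dismissal of fractional weights; since the weighted block-row sums are controlled by $\Norm{\vw}_1=k$ in exactly the same way, this is routine, but it merits a sentence.
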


Before delving into the proof of Theorem~\ref{thm:rif-ns}, we introduce a useful technical lemma:

\begin{lemma}\label{lem:sum-psd-h-inv}
Let \( \mA_1, \ldots, \mA_k \in \mathbb{R}^{d \times d} \) and let \( \mH \in \mathbb{R}^{d \times d} \) be positive semidefinite. Suppose:
\begin{itemize}
    \item \( \OpNorm{ \mH^{-1/2} \mA_i \mH^{-1/2} } \leq \sigma \) for all \( i \),
    \item \( \OpNorm{ \sqrt{\mA_i} \mH^{-1} \sqrt{\mA_j} } \leq \delta_{ij} \) for all \( i \neq j \).
\end{itemize}
Then,
\[
\OpNorm{ \sum_{i=1}^k \mH^{-1/2} \mA_i \mH^{-1/2} } \leq \sigma + \sqrt{ \sum_{i \neq j} \delta_{ij}^2 }.
\]
\end{lemma}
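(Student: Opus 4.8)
The plan is to recognize the target sum as a Gram matrix and transfer the bound to an equivalent $kd \times kd$ block matrix whose off-diagonal blocks are exactly the quantities controlled by the $\delta_{ij}$. Since each $\mA_i$ is positive semidefinite (it admits a square root $\sqrt{\mA_i}$), I would set $\mC_i := \sqrt{\mA_i}\,\mH^{-1/2}$, so that $\mH^{-1/2}\mA_i\mH^{-1/2} = \mC_i^\top \mC_i$, and stacking the $\mC_i$ vertically into $\mC := [\mC_1; \cdots; \mC_k] \in \R^{kd \times d}$ gives $\sum_{i=1}^k \mH^{-1/2}\mA_i\mH^{-1/2} = \mC^\top\mC$. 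The first key step is the identity $\OpNorm{\mC^\top\mC} = \OpNorm{\mC\mC^\top}$, which lets me replace the $d \times d$ target by the $kd \times kd$ Gram matrix $\mC\mC^\top$ whose $(i,j)$ block is $\mC_i\mC_j^\top = \sqrt{\mA_i}\,\mH^{-1}\sqrt{\mA_j}$.

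Next I would split $\mC\mC^\top = \mD + \mE$ into its block-diagonal part $\mD$ and its off-diagonal part $\mE$. The diagonal blocks are $\sqrt{\mA_i}\,\mH^{-1}\sqrt{\mA_i}$, whose operator norm equals $\OpNorm{\mH^{-1/2}\mA_i\mH^{-1/2}} \leq \sigma$ (same nonzero spectrum as $\mC_i^\top\mC_i$); since $\mD$ is block diagonal, $\OpNorm{\mD} = \max_i \OpNorm{\mC_i\mC_i^\top} \leq \sigma$. The off-diagonal blocks satisfy $\OpNorm{\mE_{ij}} = \OpNorm{\sqrt{\mA_i}\,\mH^{-1}\sqrt{\mA_j}} \leq \delta_{ij}$ by hypothesis, with $\mE_{ii} = 0$. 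By the triangle inequality $\OpNorm{\mC\mC^\top} \leq \OpNorm{\mD} + \OpNorm{\mE}$, so it suffices to show $\OpNorm{\mE} \leq \sqrt{\sum_{i \neq j} \delta_{ij}^2}$.

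The main step, and the only nonroutine one, is this bound on $\OpNorm{\mE}$, which I would obtain from the general fact that the operator norm of a block matrix is at most the spectral norm of the scalar $k \times k$ matrix $M$ of blockwise operator norms, $M_{ij} := \OpNorm{\mE_{ij}}$. Given a vector $x$ partitioned into blocks $x_1, \ldots, x_k$, one has $\Norm{(\mE x)_i}_2 = \Norm{\sum_j \mE_{ij} x_j}_2 \leq \sum_j M_{ij}\Norm{x_j}_2 = (M\tilde x)_i$, where $\tilde x := (\Norm{x_1}_2, \ldots, \Norm{x_k}_2)$; squaring and summing over $i$ gives $\Norm{\mE x}_2^2 \leq \Norm{M\tilde x}_2^2 \leq \OpNorm{M}^2 \Norm{\tilde x}_2^2 = \OpNorm{M}^2 \Norm{x}_2^2$, hence $\OpNorm{\mE} \leq \OpNorm{M}$. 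Bounding the spectral norm of $M$ by its Frobenius norm, $\OpNorm{M} \leq \FrobNorm{M} = \sqrt{\sum_{i,j} M_{ij}^2}$, and using $M_{ij} \leq \delta_{ij}$ together with $M_{ii} = 0$, yields $\OpNorm{\mE} \leq \sqrt{\sum_{i \neq j} \delta_{ij}^2}$. Combining with $\OpNorm{\mD} \leq \sigma$ gives $\OpNorm{\mC^\top\mC} = \OpNorm{\mC\mC^\top} \leq \sigma + \sqrt{\sum_{i \neq j} \delta_{ij}^2}$, as claimed. I expect the block-matrix-norm inequality to be the crux; the passage through $\mC^\top\mC$ and its companion $\mC\mC^\top$ is routine bookkeeping once the PSD factorization is in place.
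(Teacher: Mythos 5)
Your proof is correct, and it reaches the same intermediate object as the paper --- the $k \times k$ matrix of pairwise norms $\OpNorm{\sqrt{\mA_i}\,\mH^{-1}\sqrt{\mA_j}}$, split into diagonal and off-diagonal parts with the off-diagonal bounded by its Frobenius norm --- but it gets there by a genuinely different and more elementary route. The paper introduces a linear operator on rank-4 tensors, tracks blockwise operator norms through repeated contractions to get $\Sigma(\mN) \le \mDelta^\ell$, bounds $\OpNorm{C^\ell}$ by a dimension-dependent prefactor times $\OpNorm{\mDelta}^\ell$, and then takes $\ell$-th roots and lets $\ell \to \infty$ to kill the prefactor; this is a power-trick argument whose bookkeeping is the bulk of the proof. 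You instead observe that $\sum_i \mH^{-1/2}\mA_i\mH^{-1/2} = \mC^\top\mC$ for the stacked matrix $\mC$ with blocks $\sqrt{\mA_i}\,\mH^{-1/2}$, pass to $\mC\mC^\top$ via equality of nonzero spectra, and invoke (with a short self-contained proof) the standard fact that the operator norm of a block matrix is at most the operator norm of the entrywise matrix of block norms. Your route avoids the tensor formalism and the limiting argument entirely, and makes transparent where positive semidefiniteness of the $\mA_i$ is used (to form $\sqrt{\mA_i}$ and realize the sum as a Gram matrix); the paper's contraction argument is more general-purpose machinery but, for this statement, strictly more work. Both proofs share the implicit assumption, present in the lemma via the notation $\sqrt{\mA_i}$, that each $\mA_i$ is positive semidefinite.
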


\begin{proof}[Proof of Theorem~\ref{thm:rif-ns}]
We begin by analyzing the difference between the Newton step and the rescaled influence function (RIF) approximation.

Recall that the Newton step is defined as:
\[
\textup{Newton Step} = \Paren{ \nabla f }^\top \Paren{ \mH - \sum_{j=1}^n w_j \mH_j }^{-1} \sum_{i=1}^n w_i \vg_i,
\]
where each \( \vg_i \in \mathbb{R}^d \) is the $i$th gradient component, and \( \mH_i \) is the $i$th contribution to the Hessian. Define the weighted Hessian:
\[
\mH_\vw := \mH - \sum_{j=1}^n w_j \mH_j.
\]

For each \( i \in \set{1, \dots, n} \), define \( \vw^{(i)} := w \cdot \ind_{ \set{i} } \) to isolate the \( i \)-th coordinate. The RIF estimator is given by:
\[
\textup{RIF}_i = \sum_{i=1}^n \Paren{ \nabla f }^\top \mH_{\vw^{(i)}}^{-1} w_i \vg_i\,.
\]

Our goal is to bound the difference between the Newton step and RIF estimators and we do this by bounding the contribution of each individual sample.
That is, for each $i \in [n]$, we will try to bound
\[
\Paren{ \nabla f }^\top \Paren{ \mH_\vw^{-1} - \mH_{\vw^{(i)}}^{-1} } \vg_i\,.
\]

To do so, we begin by expressing each matrix in terms of \( \mH \) and its perturbations. Observe:
\[
\mH_\vw = \mH^{1/2} \Paren{ I - \mG_\vw } \mH^{1/2},
\quad \text{where } \mG_\vw := \sum_j \mH^{-1/2} w_j \mH_j \mH^{-1/2}.
\]
Moreover, we define $\mR := \Paren{ I - \mG_{\vw^{(i)}} }^{-1}$, where $\mG_{\vw^{(i)}} = \mH^{-1/2} w_i \mH_i \mH^{-1/2}$. We have
\[
\mH_{\vw^{(i)}} = \mH^{1/2} \Paren{ I - \mG_{\vw^{(i)}} } \mH^{1/2}.
\]

Using the matrix identity:
\[
\Paren{ A - B }^{-1} = A^{-1} + \Paren{ A - B }^{-1} B A^{-1},
\]
with \( A = \mH_{\vw^{(i)}} \), \( B = \mH_{\vw^{(i)}} - \mH_\vw \), we obtain:
\[
\mH_\vw^{-1} = \mH_{\vw^{(i)}}^{-1} + \mH_\vw^{-1} \Paren{ \mH_{\vw^{(i)}} - \mH_\vw } \mH_{\vw^{(i)}}^{-1}.
\]

We now expand the correction term on the right-hand side further by applying the same identity again, this time expanding $\mH_\vw = \mH - (\mH - \mH_\vw)$,
\[
\mH_\vw^{-1} \Paren{ \mH_{\vw^{(i)}} - \mH_\vw } \mH_{\vw^{(i)}}^{-1} = \mH^{-1} \Paren{ \mH_{\vw^{(i)}} - \mH_\vw } \mH_{\vw^{(i)}}^{-1} + \mH^{-1} \Paren{ \mH - \mH_\vw } \mH_\vw^{-1} \Paren{ \mH_{\vw^{(i)}} - \mH_\vw } \mH_{\vw^{(i)}}^{-1},
\]
where the second term reflects higher-order correction contributions due to recursive matrix inversion.

To bound the full error
\begin{align*}
    \Paren{ \nabla f }^\top \Paren{ \mH_\vw^{-1} - \mH_{\vw^{(i)}}^{-1} } \vg_i  =&\Paren{ \nabla f }^\top \mH^{-1} \Paren{ \mH_{\vw^{(i)}} - \mH_\vw } \mH_{\vw^{(i)}}^{-1} \vg_i + \\
    & \;\;\;\;\;+ \Paren{ \nabla f }^\top \mH^{-1} \Paren{ \mH - \mH_\vw } \mH_\vw^{-1} \Paren{ \mH_{\vw^{(i)}} - \mH_\vw } \mH_{\vw^{(i)}}^{-1} \vg_i\,.
\end{align*}

It suffices to control the size of each of these terms separately.
In other words, we will proceed to bound:
\begin{enumerate}
    \item The first order correction \( \Paren{ \nabla f }^\top \mH^{-1} \Paren{ \mH_{\vw^{(i)}} - \mH_\vw } \mH_{\vw^{(i)}}^{-1} \vg_i \),
    \item The higher order terms\( \Paren{ \nabla f }^\top \mH^{-1} \Paren{ \mH - \mH_\vw } \mH_\vw^{-1} \Paren{ \mH_{\vw^{(i)}} - \mH_\vw } \mH_{\vw^{(i)}}^{-1} \vg_i \).
\end{enumerate}

\textbf{Bounding the First Order Correction}

To bound the first order correction, we use the same formula above to split $\mH_{\vw^{{(i)}}}^{-1}$ into a leading term and higher order terms.
The goal of this separation is to show that this update to the Hessian does not rotate too much of the weight of $\vg_i$ onto the eigenspace of $\mH_j$ for any $j \neq i$

We have $\mH_{\vw^{{(i)}}}^{-1} = \mH^{-1} + \mH^{-1} w_i \mH_i \mH_{\vw^{{(i)}}}^{-1}$.

Therefore, for any $j \neq i$,
\[
\Norm{\mH_j^{1/2} \mH_{\vw^{{(i)}}}^{-1} \vg_i}_2 \leq  \underbrace{\Norm{\mH_j^{1/2} \mH^{-1} \vg_i}_2}_{\leq \varepsilon} + \underbrace{\Norm{w_i \mH_j^{1/2} \mH^{-1} \mH_i \mH^{-1/2} \mR \mH^{-1/2} \vg_i}_2}_{\leq \abs{w_i} \delta C_R C_\ell \leq \delta C_R C_\ell} \leq \varepsilon + \delta C_R C_\ell
\]

Therefore, this first order correction is at most
\[
\sum_{j \neq i} w_j \nabla f^\intercal \mH^{-1} \mH_j \mH_{\vw^{{(i)}}}^{-1} \vg_i \leq \sum_{j \neq i}w_j  \underbrace{\Norm{\nabla f^\intercal \mH^{-1} \mH_j^{1/2}}_2}_{\leq \eta} \underbrace{\Norm{\mH_j^{1/2} \mH_{\vw^{{(i)}}}^{-1} \vg_i}_2}_{\leq \varepsilon + C_R C_\ell \delta} \leq k \eta \Paren{\varepsilon + C_R C_\ell \delta}
\]

\textbf{Bounding the Higher Order Corrections}

We next bound the second (higher-order) term using the Cauchy-Schwarz inequality. 
\begin{align*}
    &\abs{\Paren{ \nabla f }^\top \mH^{-1} \Paren{ \mH - \mH_\vw } \mH_\vw^{-1} \Paren{ \mH_{\vw^{(i)}} - \mH_\vw } \mH_{\vw^{(i)}}^{-1} \vg_i} \leq \\
    &\;\;\;\;\; \leq\Norm{\Paren{ \nabla f }^\top \mH^{-1} \Paren{ \mH - \mH_\vw } \mH^{-1/2}}_2 \times \OpNorm{\Paren{\mI - \mG_\vw}^{-1}} \times \Norm{\mH^{-1/2} \Paren{ \mH_{\vw^{(i)}} - \mH_\vw } \mH_{\vw^{(i)}}^{-1} \vg_i}_2
\end{align*}

We will bound each of these terms independently.

The right-most multiplicand is bounded using the analysis of the first order term
\begin{align*}
    \Norm{\mH^{-1/2} \Paren{ \mH_{\vw^{(i)}} - \mH_\vw } \mH_{\vw^{(i)}}^{-1} \vg_i}_2 &\leq \sum_{j \neq i} \Norm{\mH^{-1/2} w_j \mH_j^{1/2} \mH_j^{1/2} \mH_{\vw^{(i)}}^{-1} \vg_i}_2 \leq \\
    &\leq \sum_{j \neq i} \Norm{w_j \mH_j^{1/2} \mH_{\vw^{(i)}}^{-1} \vg_i}_2 \leq k \Paren{\varepsilon + C_R C_\ell \delta}
\end{align*}

From the triangle inequality,
\[
\Norm{ \sum_{j \ne i} \nabla f^\top \mH^{-1} \mH_j \mH^{-1/2} }
\leq \sum_{j \ne i} \abs{w_j} \cdot \Norm{ \nabla f^\top \mH^{-1} \mH_j^{1/2} } \cdot \OpNorm{ \mH_j^{1/2} \mH^{-1/2} }.
\]
Using the assumption \( \OpNorm{ \mH^{-1/2} \mH_j \mH^{-1/2} } \leq 1 \), it follows that
\[
\OpNorm{ \mH_j^{1/2} \mH^{-1/2} } \leq 1,
\]
and from Assumption 5, we also have
\[
\Norm{ \nabla f^\top \mH^{-1} \mH_j^{1/2} } \leq \eta.
\]
Therefore,
\[
\Norm{ \sum_{j \ne i} \nabla f^\top \mH^{-1} \mH_j \mH^{-1/2} }
\leq \eta \sum_{j \ne i} \abs{w_j} \leq \eta \Norm{w}_1 = \eta k.
\]

Next, define \( \mA_j =  w_j \mH^{-1/2} \mH_j \mH^{-1/2} \). Then for all \( j \),
\[
\OpNorm{ \mH^{-1/2} \mA_j \mH^{-1/2} }
= \abs{w_j} \cdot \OpNorm{ \mH^{-1/2} \mH_j \mH^{-1/2} }
\leq 1 - \frac{1}{C_R},
\]
since \( \Norm{w}_\infty \leq 1 \) and by Assumption 2 \( \OpNorm{ \mH^{-1/2} \mH_j \mH^{-1/2} } \leq 1 - \frac{1}{C_R} \).

Moreover, for all \( i \ne j \), we have
\[
\OpNorm{ \sqrt{\mA_i} \mH^{-1} \sqrt{\mA_j} }
\leq \sqrt{ \abs{w_i} } \cdot \sqrt{ \abs{w_j} } \cdot \delta_{ij}.
\]
So,
\[
\sum_{i \ne j} \OpNorm{ \sqrt{\mA_i} \mH^{-1} \sqrt{\mA_j} }^2
\leq \sum_{i \ne j} \abs{w_i} \abs{w_j} \delta_{ij}^2
\leq \Paren{ \Norm{w}_1 }^2 \cdot \delta^2 = k^2 \delta^2.
\]

Applying Lemma~\ref{lem:sum-psd-h-inv} to the collection \( \set{ \mA_j } \), we conclude that
\[
\OpNorm{ \mG_\vw } \leq 1 - \frac{1}{C_R} + k \delta.
\]

For any $k < \frac{1}{2\delta C_R}$, it follows that \( I - \mG_\vw \) is PSD and \( \OpNorm{\mG_\vw} < 1 \), so we have
\[
\OpNorm{ \Paren{I - \mG_\vw}^{-1} } \leq \frac{1}{\frac{1}{C_R} - k \delta} \leq 2 C_R\,.
\]

\textbf{Summary:}

So far, we have show that for all $i \in [n]$, 
\[
\abs{\Paren{ \nabla f }^\top \Paren{ \mH_\vw^{-1} - \mH_{\vw^{(i)}}^{-1} } \vg_i} \leq \eta k \Paren{\varepsilon + C_R C_\ell \delta} + \eta k \times 2 C_R \times \Paren{\varepsilon + C_R C_\ell \delta}\,.
\]

Therefore,
\[
\abs{\textup{Newton Step} - \textup{RIF}} = \abs{\sum_{i=1}^n w_i \Paren{ \nabla f }^\top \Paren{ \mH_\vw^{-1} - \mH_{\vw^{(i)}}^{-1} } \vg_i} \leq k^2 \eta \Paren{1 + 2C_R} \Paren{\varepsilon + C_R C_\ell \delta}
\]

\end{proof}

\begin{proof}[Proof of Lemma~\ref{lem:sum-psd-h-inv}]
We define the linear operator $C:\R^{k \times k \times d \times d} \rightarrow \R^{d \times d}$ to be
\[
C(\mM) := \sum_{i,j} \mH^{-1/2} \sqrt{\mA_i} \, \mM_{ij} \, \sqrt{\mA_j} \mH^{-1/2},
\]
where \( \mM \in \mathbb{R}^{k \times k \times d \times d} \) is a rank-4 tensor with \( \mM_{ij} \in \mathbb{R}^{d \times d} \).

For tensors \( \mM \), \( \mN \), define their contraction:
\[
C(\mM) C(\mN) = C(\mL), \quad \text{where } \mL_{ij} = \sum_{q,r} \mM_{iq} \cdot \sqrt{\mA_q} \mH^{-1} \sqrt{\mA_r} \cdot \mN_{rj}.
\]

Define \( \Sigma: \R^{k \times k \times d \times d} \rightarrow \mathbb{R}^{k \times k} \) as \( \Sigma(\mM)_{ij} := \OpNorm{\mM_{ij}} \), and define \( \mDelta \in \mathbb{R}^{k \times k} \) with entries \( \mDelta_{ij} = \OpNorm{ \sqrt{\mA_i} \mH^{-1} \sqrt{\mA_j} } \). Then by the triangle inequality and submultiplicativity of the operator norm, we have the point-wise inequality
\[
\Sigma(\mL) \leq \Sigma(\mM) \cdot \mDelta \cdot \Sigma(\mN).
\]

Applying this iteratively for a sequence \( \mM_1, \ldots, \mM_\ell \), we obtain:
\[
\Sigma(\mN) \leq \Sigma(\mM_1) \cdot \mDelta \cdot \Sigma(\mM_2) \cdot \mDelta \cdots \mDelta \cdot \Sigma(\mM_\ell).
\]

Now consider the identity tensor \( \mM \) with \( \mM_{ii} = I_d \) and \( \mM_{ij} = 0 \) for \( i \neq j \). Then:
\[
C(\mM) = \sum_i \mH^{-1/2} \sqrt{\mA_i} I_d \sqrt{\mA_i} \mH^{-1/2} = \sum_i \mH^{-1/2} \mA_i \mH^{-1/2}.
\]

Let \( C := C(\mM) \). Then:
\[
C^\ell = C(\mM)^\ell = C(\mN), \quad \text{with } \Sigma(\mN) \leq \mDelta^\ell.
\]

By triangle inequality and bounding each tensor entry:
\[
\OpNorm{C^\ell} \leq k^2 d^2 \cdot \max_i \OpNorm{ \mH^{-1/2} \mA_i^{1/2} }^2 \cdot \OpNorm{ \mDelta^\ell } \leq k^2 d^2 \sigma \cdot \OpNorm{\mDelta}^\ell.
\]

Taking \( \ell \)-th roots:
\[
\OpNorm{C} \leq (k^2 d^2 \sigma)^{1/\ell} \cdot \OpNorm{ \mDelta }.
\]

Letting \( \ell \to \infty \), the prefactor tends to 1, giving:
\[
\OpNorm{ \sum_i \mH^{-1/2} \mA_i \mH^{-1/2} } \leq \OpNorm{ \mDelta }.
\]

Now bound \( \OpNorm{ \mDelta } \). Each diagonal entry \( \mDelta_{ii} = \OpNorm{ \sqrt{\mA_i} \mH^{-1} \sqrt{\mA_i} } = \OpNorm{ \mH^{-1/2} \mA_i \mH^{-1/2} } \leq \sigma \). Thus,
\[
\mDelta = D + R, \quad \text{with } D = \operatorname{diag}(\OpNorm{ \mH^{-1/2} \mA_1 \mH^{-1/2} }, \ldots), \quad \OpNorm{D} \leq \sigma.
\]

Then:
\[
\OpNorm{ \mDelta } \leq \OpNorm{D} + \OpNorm{R} \leq \sigma + \FrobNorm{R},
\]
where \( R \) is the off-diagonal part of \( \mDelta \) and \( \FrobNorm{R}^2 = \sum_{i \neq j} \delta_{ij}^2 \).

Hence:
\[
\OpNorm{ \sum_{i=1}^k \mH^{-1/2} \mA_i \mH^{-1/2} } \leq \sigma + \sqrt{ \sum_{i \neq j} \delta_{ij}^2 }.
\]
\end{proof}

\section{Asymptotic Analyses of the Bounds of \texorpdfstring{\cite{Koh2019} and~\cite{giordano2019swiss}}{KATL19 and GSL+19}}

\subsection{Analysis of \texorpdfstring{\cite{Koh2019}}{KATL19}}

Koh et al.~\cite{Koh2019} present two main theoretical results. The first bounds the difference between a single Newton step and a full retrain, and the second bounds the difference between the Newton step and the influence function estimate.
We focus on the latter, since that is more directly comparable to the guarantees of Theorem~\ref{thm:rif-ns}.
To facilitate a direct comparison, we restate their Proposition 2 with all assumptions made explicit below.

\begin{proposition}[Proposition 2 of~\cite{Koh2019}, rephrased]
\label{prop:katl19}
Assume the evaluation function $f(\theta)$ is $C_f$-Lipschitz, the Hessian $\nabla_\theta^2 \ell(x, y, \theta)$ is $C_H$-Lipschitz, and the third derivative of $f(\theta)$ exists and is bounded in norm by $C_{f,3}$. Let $\sigma_{\min}$ and $\sigma_{\max}$ be the smallest and largest eigenvalues of $H_1$, respectively, and define
\[
C_\ell \triangleq \max_{1 \leq i \leq n} \left\| \nabla_\theta \ell(x_i, y_i; \hat{\theta}(1)) \right\|_2.
\]
Then the Newton-influence error $\mathrm{Err}_{\mathrm{Nt\text{-}inf}}(w)$ is
\[
\mathrm{Err}_{\mathrm{Nt\text{-}inf}}(w) = \nabla_{\theta} f(\hat{\theta}(1))^\top \mH_{\lambda,1}^{-1/2} \mD(w) \mH_{\lambda,1}^{-1/2} \vg(\vw)
+ \underbrace{\frac{1}{2} \Delta \theta_{\mathrm{Nt}}(w)^\top \nabla^2_{\theta} f(\hat{\theta}(1)) \Delta \theta_{\mathrm{Nt}}(w)
+ \mathrm{Err}_{f,3}(w)}_{\text{Error from the curvature of }f(\cdot)},
\]
where
\[
\mD(w) \defeq \left(I - H_{\lambda,1}^{-1/2} H_1(w) H_{\lambda,1}^{-1/2} \right)^{-1} - I,
\quad \text{and} \quad
H_1(w) \defeq \sum_{i=1}^{n} w_i \nabla^2_{\theta} \ell(x_i, y_i; \hat{\theta}(1)).
\]
The matrix $\mD(w)$ has eigenvalues between $0$ and $\sigma_{\max}/\lambda$. The residual term $\mathrm{Err}_{f,3}(w)$ captures the error due to third-order derivatives and is bounded by
\[
|\mathrm{Err}_{f,3}(w)| \leq \|w\|_1^3 C_{f,3} C_\ell^3 / \left(6 (\sigma_{\min} + \lambda)^3 \right).
\]
\end{proposition}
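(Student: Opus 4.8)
The plan is to read $\mathrm{Err}_{\mathrm{Nt\text{-}inf}}(w)$ as the difference between the Newton-step prediction of the change in $f$ and the purely linear influence-function prediction, then split it into three pieces: a first-order (gradient) discrepancy, the intrinsic curvature of $f$, and a third-order Taylor remainder. Writing the Newton parameter change as $\Delta\theta_{\mathrm{Nt}}(w) = (H_{\lambda,1}-H_1(w))^{-1}\vg(w)$ and the influence change as $H_{\lambda,1}^{-1}\vg(w)$, and Taylor-expanding $f$ around $\hat\theta(1)$ along $\Delta\theta_{\mathrm{Nt}}(w)$ to second order with Lagrange remainder, I obtain
\[
\mathrm{Err}_{\mathrm{Nt\text{-}inf}}(w) = \nabla_\theta f^\top\Brac{(H_{\lambda,1}-H_1(w))^{-1}-H_{\lambda,1}^{-1}}\vg(w) + \tfrac12\Delta\theta_{\mathrm{Nt}}(w)^\top\nabla^2_\theta f\,\Delta\theta_{\mathrm{Nt}}(w) + \mathrm{Err}_{f,3}(w).
\]
The last two summands are exactly the bracketed ``curvature of $f$'' contribution in the statement, so it remains to (i) rewrite the first summand using $\mD(w)$, (ii) bound the spectrum of $\mD(w)$, and (iii) bound $\mathrm{Err}_{f,3}(w)$.

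For (i), I would symmetrize by factoring out the full regularized Hessian. Since $H_{\lambda,1}-H_1(w) = H_{\lambda,1}^{1/2}\Paren{I - M}H_{\lambda,1}^{1/2}$ with $M := H_{\lambda,1}^{-1/2}H_1(w)H_{\lambda,1}^{-1/2}$, inverting and subtracting $H_{\lambda,1}^{-1} = H_{\lambda,1}^{-1/2}\,I\,H_{\lambda,1}^{-1/2}$ gives
\[
(H_{\lambda,1}-H_1(w))^{-1}-H_{\lambda,1}^{-1} = H_{\lambda,1}^{-1/2}\Brac{(I-M)^{-1}-I}H_{\lambda,1}^{-1/2} = H_{\lambda,1}^{-1/2}\mD(w)H_{\lambda,1}^{-1/2},
\]
which is precisely the main term. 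This step is purely algebraic and loses nothing.

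For (ii), note $H_1(w)=\sum_i w_i\nabla^2_\theta\ell_i$ is a nonnegative combination ($w_i\in[0,1]$) of PSD matrices, so $0\preceq H_1(w)\preceq H_1$ and hence $0\preceq M\preceq H_{\lambda,1}^{-1/2}H_1 H_{\lambda,1}^{-1/2}$. The eigenvalues of the latter are $\mu/(\mu+\lambda)$ as $\mu$ ranges over the spectrum of $H_1$, so they lie in $[0,\sigma_{\max}/(\sigma_{\max}+\lambda)]$; thus $M$ has eigenvalues in that range and $I-M\succ0$. Applying the scalar map $m\mapsto m/(1-m)=\tfrac{1}{1-m}-1$, monotone on $[0,1)$, to the eigenvalues of $M$ shows $\mD(w)=(I-M)^{-1}-I$ has eigenvalues in $[0,\sigma_{\max}/\lambda]$, since $m=\sigma_{\max}/(\sigma_{\max}+\lambda)$ maps to $\sigma_{\max}/\lambda$.

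For (iii), the third-order Lagrange remainder of $f$ along $\Delta\theta_{\mathrm{Nt}}(w)$ obeys $|\mathrm{Err}_{f,3}(w)|\le\tfrac16 C_{f,3}\Norm{\Delta\theta_{\mathrm{Nt}}(w)}_2^3$ by the assumed third-derivative bound $C_{f,3}$. I then use $\Norm{\vg(w)}_2\le\sum_i|w_i|\Norm{\vg_i}_2\le\Norm{w}_1 C_\ell$ together with an operator-norm bound on $(H_{\lambda,1}-H_1(w))^{-1}$ to conclude $\Norm{\Delta\theta_{\mathrm{Nt}}(w)}_2\le\Norm{w}_1 C_\ell/(\sigma_{\min}+\lambda)$, whence the stated cube. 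The main obstacle is exactly this last operator-norm bound: the cheap estimate $\sum_i(1-w_i)\nabla^2_\theta\ell_i\succeq0$ only yields $(H_{\lambda,1}-H_1(w))^{-1}\preceq\lambda^{-1}I$, i.e.\ a denominator of $\lambda$ rather than $\sigma_{\min}+\lambda$. Getting the sharper $(\sigma_{\min}+\lambda)^{-1}$ requires arguing that removing the $w$-weighted samples does not collapse the smallest eigenvalue of the regularized reweighted Hessian below $\sigma_{\min}+\lambda$ — this is the one place where the conditioning of the reweighted problem, and not merely its positive-definiteness, genuinely enters, and where the assumptions controlling $\sigma_{\min}$ must be invoked.
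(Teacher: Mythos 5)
This proposition is imported from \cite{Koh2019} and is not proved in the paper at all --- it is restated only so that its asymptotic behavior can be compared against Theorem~\ref{thm:rif-ns} --- so your argument has to stand on its own. Your overall route is the natural one and matches the original: define the error as (Newton prediction of the change in $f$) minus (linear influence prediction), Taylor-expand $f$ along $\Delta\theta_{\mathrm{Nt}}(w)$ to isolate the curvature terms, factor $H_{\lambda,1}-H_1(w)=H_{\lambda,1}^{1/2}(I-M)H_{\lambda,1}^{1/2}$ to produce $\mD(w)$, and use the spectral map $m\mapsto m/(1-m)$ together with $0\preceq H_1(w)\preceq H_1$ (which needs per-sample convexity, implicit throughout) to place the eigenvalues of $\mD(w)$ in $[0,\sigma_{\max}/\lambda]$. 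All of that is correct.

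The one step you flag as an obstacle is a genuine gap, and --- with $\sigma_{\min}$ defined as in this restatement, namely the smallest eigenvalue of the \emph{full-data} Hessian $H_1$ --- it cannot be closed. You need $\Norm{\Delta\theta_{\mathrm{Nt}}(w)}_2\leq \Norm{w}_1 C_\ell/(\sigma_{\min}+\lambda)$, i.e.\ $\lambda_{\min}\bigl(H_{\lambda,1}-H_1(w)\bigr)\geq\sigma_{\min}+\lambda$. But $\sum_i(1-w_i)\nabla^2_\theta\ell_i\preceq\sum_i\nabla^2_\theta\ell_i$, so the smallest eigenvalue of the reweighted data Hessian is \emph{at most} $\sigma_{\min}$, and can be driven to $0$ (remove all samples supporting some direction); the only bound available from the stated hypotheses is $\lambda_{\min}\geq\lambda$, giving a denominator of $6\lambda^3$ rather than $6(\sigma_{\min}+\lambda)^3$. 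The resolution is definitional rather than a missing argument: in the source, the relevant minimum-eigenvalue constant is taken uniformly over the reweighted Hessians (equivalently, one assumes $\sum_i(1-w_i)\nabla^2_\theta\ell_i\succeq\sigma_{\min}I$ for all $w$ under consideration), under which your operator-norm step goes through verbatim. You should either adopt that stronger reading of $\sigma_{\min}$ explicitly or weaken the stated remainder bound to $\Norm{w}_1^3C_{f,3}C_\ell^3/(6\lambda^3)$; as written, the restated constant does not follow from the restated assumptions.
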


To compare this guarantee with Theorem~\ref{thm:rif-ns}, which bounds the inner product between the data attribution error and $\nabla f$, we focus on the first term in the bound from Proposition~\ref{prop:katl19}. This term quantifies the error in estimating the linear evaluation function $f$ using influence functions.

Recall that in the simple linear regression setting we define for our simplified asymptotic analysis, we have $\mH \approx n \mI$, and this is also the case with $\mH_{\lambda, 1}$.
Using the bound $\mD(w) \preceq \frac{\sigma_{\max}}{\lambda} \mI$ from Proposition~\ref{prop:katl19}, the Cauchy–Schwarz inequality gives:
\[
\abs{\nabla_{\theta} f(\hat{\theta}(1))^\top \mH_{\lambda,1}^{-1/2} \mD(w) \mH_{\lambda,1}^{-1/2} \vg(\vw)} \lesssim \frac{\sigma_{\max}}{n \lambda} \Norm{\nabla_{\theta} f(\hat{\theta}(1))}_2 \Norm{\vg(\vw)}_2\,.
\]

The scaling of ${\sigma_{\max}} / {\lambda}$ depends on the regime. Under strong regularization (e.g., bottom-right of Figure~\ref{fig:n-versus-d-empirical}), it may be $O(1)$. However, as Koh et al. observe, this rarely happens in practice, suggesting that it would be more reasonable to assume that $\sigma_{\max}/\lambda = \omega(1)$.

Let $\vg$ denote the per-sample gradient, so that $\vg(\vw) = \sum_i w_i \vg_i$ represents the total gradient over removed samples.
Following Koh et al.'s approach in Proposition 1, we apply the triangle inequality to bound
\[
\Norm{\vg(\vw)}_2 \leq \Norm{\vw}_1 \max_{i \in [n]}\set{\Norm{\vg_i}_2} = \Theta(k \sqrt{d})\,.
\]

Altogether, the Koh et al. bound on the difference between the IF and the NS estimations for the 1st order change in $f$ comes out to
\[
\frac{\sigma_{\max}}{n \lambda} \Norm{\nabla_{\theta} f(\hat{\theta}(1))}_2 \Norm{\vg(\vw)}_2 = \omega\Paren{\frac{k \sqrt{d}}{n}} \times \Norm{\nabla_{\theta} f(\hat{\theta}(1))}_2
\]

To get a sense for the scaling of this bound, as with the bound of Theorem~\ref{thm:rif-ns}, we compare it to the actual IF estimate to obtain an estimate of signal-to-noise-ratio between IF and its distance from NS
\[
\textup{SNR} = \frac{\max_{\Norm{\vw}_1 \leq k} \set{\abs{\iprod{\nabla_\theta f, \vtheta^{\textup{IF}}_{\vw} - \vtheta_{\vw}}}}}{\mathrm{Err}_{\mathrm{Nt\text{-}inf}}(w)} = \Theta\Paren{\frac{\lambda}{\sigma_{\max}}} = o(1)\,.
\]

Therefore, the guarantee of Koh et al. do not rule out the possibility of the difference between the NS estimate and the IF estimate completely dominating the removal effects even in simple scenarios (regardless of how $k, d$ may scale with $n$).

\subsection{Analysis of \texorpdfstring{\cite{giordano2019swiss}}{GSL+19}}
\subsubsection{Assumptions and Statement}
We now summarize the theoretical guarantees provided by Giordano et al., which underlie their infinitesimal jackknife approximation for estimating the effect of data perturbations.

\begin{assumption}[Smoothness; Assumption 1 of~\cite{giordano2019swiss}] 
\label{assump:giordano-smooth}
For all $\theta \in \Omega_\theta$, each $g_n(\theta)$ is continuously differentiable in $\theta$.
\end{assumption}

\begin{assumption}[Non-degeneracy; Assumption 2 of~\cite{giordano2019swiss}]
\label{assump:giordano-nondegen}
For all $\theta \in \Omega_\theta$, the Hessian $H(\theta, \bm{1}_w)$ is non-singular, with
\[
\sup_{\theta \in \Omega_\theta} \left\| H(\theta, \bm{1}_w)^{-1} \right\|_{\text{op}} \leq C_{\text{op}} < \infty.
\]
\end{assumption}

\begin{assumption}[Bounded averages; Assumption 3 of~\cite{giordano2019swiss}]
\label{assump:giordano-boundedavg}
There exist finite constants $C_g$ and $C_h$ such that
\[
\sup_{\theta \in \Omega_\theta} \frac{1}{\sqrt{N}} \left\| g(\theta) \right\|_2 \leq C_g
\quad \text{and} \quad
\sup_{\theta \in \Omega_\theta} \frac{1}{\sqrt{N}} \left\| h(\theta) \right\|_2 \leq C_h.
\]
\end{assumption}

\begin{assumption}[Local smoothness; Assumption 4 of~\cite{giordano2019swiss}]
\label{assump:giordano-localsmooth}
There exists $\Delta_\theta > 0$ and a finite constant $L_h$ such that for all $\theta$ with $\| \theta - \hat{\theta}_1 \|_2 \leq \Delta_\theta$,
\[
\frac{1}{\sqrt{N}} \left\| h(\theta) - h(\hat{\theta}_1) \right\|_2 \leq L_h \left\| \theta - \hat{\theta}_1 \right\|_2.
\]
\end{assumption}

\begin{assumption}[Bounded weight averages; Assumption 5 of~\cite{giordano2019swiss}]
\label{assump:giordano-weight}
The weighted norm $\frac{1}{\sqrt{N}} \| w \|_2$ is uniformly bounded for $w \in W$ by a constant $C_w < \infty$.
\end{assumption}

\begin{condition}[Set complexity; Condition 1 of~\cite{giordano2019swiss}]
\label{cond:giordano-complexity}
There exists a $\delta \geq 0$ and a corresponding subset $W_\delta \subseteq W$ such that:
\[
\max_{w \in W_\delta} \sup_{\theta \in \Omega_\theta} \left\| \frac{1}{N} \sum_{n=1}^N (w_n - 1) g_n(\theta) \right\|_1 \leq \delta,
\quad \text{and} \quad
\max_{w \in W_\delta} \sup_{\theta \in \Omega_\theta} \left\| \frac{1}{N} \sum_{n=1}^N (w_n - 1) h_n(\theta) \right\|_1 \leq \delta.
\]
\end{condition}

\begin{definition}[Constants from Assumptions]
\label{def:giordano-constants}
Define
\[
C_{\text{IJ}} := 1 + D C_w L_h C_{\text{op}},
\quad\text{and}\quad
\Delta_\delta := \min \left\{ \Delta_\theta C_{\text{op}}^{-1},\ \tfrac{1}{2} C_{\text{IJ}}^{-1} C_{\text{op}}^{-1} \right\}.
\]
\end{definition}

\begin{theorem}[Error bound for the approximation; Theorem 1 of~\cite{giordano2019swiss}]
\label{thm:giordano-main}
Under Assumptions~\ref{assump:giordano-smooth}--\ref{assump:giordano-weight}, if $\delta \leq \Delta_\delta$, then
\[
\max_{w \in W_\delta} \left\| \hat{\theta}_{\text{IJ}}(w) - \hat{\theta}(w) \right\|_2 \leq 2 C_{\text{op}}^2 C_{\text{IJ}} \delta^2.
\]
\end{theorem}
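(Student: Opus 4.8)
The plan is to recognize the infinitesimal-jackknife estimate as a single quasi-Newton step and then run a contraction-mapping argument that simultaneously establishes existence of the exact reweighted solution and upgrades the naive $O(\delta)$ error to the claimed $O(\delta^2)$. Write $G(\theta,w) = \frac{1}{N}\sum_n w_n g_n(\theta)$ for the weighted estimating function, $H(\theta,w) = \partial_\theta G(\theta,w) = \frac{1}{N}\sum_n w_n h_n(\theta)$ for its Jacobian, and $H_1 = H(\hat\theta_1,\mathbf{1}_w)$. Consider the map $F(\theta) := \theta - H_1^{-1}G(\theta,w)$, whose fixed points are exactly the solutions of $G(\theta,w)=0$, i.e. $\hat\theta(w)$. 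The first observation is that, because $\hat\theta_1$ solves the full-weight equation $G(\hat\theta_1,\mathbf{1}_w)=0$, one has $G(\hat\theta_1,w) = \frac{1}{N}\sum_n (w_n-1)g_n(\hat\theta_1)$, and therefore $F(\hat\theta_1) = \hat\theta_1 - H_1^{-1}\frac{1}{N}\sum_n(w_n-1)g_n(\hat\theta_1) = \hat\theta_{\mathrm{IJ}}(w)$. In other words the IJ estimate is precisely the first iterate of $F$ started at $\hat\theta_1$, so the target quantity is $\hat\theta(w)-\hat\theta_{\mathrm{IJ}}(w) = F(\hat\theta(w)) - F(\hat\theta_1)$. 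Note also that $\|\hat\theta_{\mathrm{IJ}}(w)-\hat\theta_1\|_2 \le C_{\mathrm{op}}\delta$, since the first clause of Condition~\ref{cond:giordano-complexity} bounds $\|G(\hat\theta_1,w)\|_2\le\delta$ and Assumption~\ref{assump:giordano-nondegen} gives $\|H_1^{-1}\|_{\mathrm{op}}\le C_{\mathrm{op}}$.

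Next I would control the Jacobian $F'(\theta) = I - H_1^{-1}H(\theta,w) = H_1^{-1}\big(H_1 - H(\theta,w)\big)$ on a ball $B_r = \{\|\theta-\hat\theta_1\|_2\le r\}$. Splitting $H_1 - H(\theta,w) = \big[H(\hat\theta_1,\mathbf{1}_w)-H(\theta,\mathbf{1}_w)\big] + \big[H(\theta,\mathbf{1}_w)-H(\theta,w)\big]$, the first bracket is bounded by the Hessian-Lipschitz Assumption~\ref{assump:giordano-localsmooth} (after converting the $\frac{1}{\sqrt{N}}$-normalized bound to an operator-norm bound, which is the source of the dimension factor $D$ and of $C_w$ through Assumption~\ref{assump:giordano-weight}), giving a term of size $O(D C_w L_h\,\|\theta-\hat\theta_1\|_2)$; the second bracket is the weight perturbation $\frac{1}{N}\sum_n(w_n-1)h_n(\theta)$, whose norm is at most $\delta$ uniformly in $\theta$ by the second clause of Condition~\ref{cond:giordano-complexity}. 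This yields $\|F'(\theta)\|_{\mathrm{op}} \le C_{\mathrm{op}}\big(D C_w L_h\, r + \delta\big)$ on $B_r$. The hypothesis $\delta\le\Delta_\delta$ is exactly calibrated so that, on a ball of radius $r = 2C_{\mathrm{op}}\delta \le \Delta_\theta$ where Assumption~\ref{assump:giordano-localsmooth} applies, $F$ is a contraction and maps $B_r$ into itself (the self-mapping uses $\|F(\hat\theta_1)-\hat\theta_1\|_2\le C_{\mathrm{op}}\delta$). By Banach's fixed-point theorem this produces the exact solution $\hat\theta(w)\in B_r$ together with the a priori displacement bound $\|\hat\theta(w)-\hat\theta_1\|_2 \le 2C_{\mathrm{op}}\delta$.

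The final, and conceptually key, step is the upgrade to second order. Writing $\hat\theta(w)-\hat\theta_{\mathrm{IJ}}(w) = F(\hat\theta(w))-F(\hat\theta_1) = \big(\int_0^1 F'(\hat\theta_1 + t(\hat\theta(w)-\hat\theta_1))\,dt\big)(\hat\theta(w)-\hat\theta_1)$, the point is that along this particular segment every point lies within $2C_{\mathrm{op}}\delta$ of $\hat\theta_1$, so the Jacobian bound above gives $\|F'\|_{\mathrm{op}} \le C_{\mathrm{op}}(D C_w L_h\cdot 2C_{\mathrm{op}}\delta + \delta) = O(C_{\mathrm{op}}C_{\mathrm{IJ}}\,\delta)$ on the whole segment, where $C_{\mathrm{IJ}}=1+DC_wL_hC_{\mathrm{op}}$; crucially, both pieces of $F'$ are of order $\delta$ here, rather than order $1$. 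Multiplying this $O(\delta)$ Jacobian bound by the $O(\delta)$ a priori displacement $\|\hat\theta(w)-\hat\theta_1\|_2\le 2C_{\mathrm{op}}\delta$ gives $\|\hat\theta(w)-\hat\theta_{\mathrm{IJ}}(w)\|_2 = O(C_{\mathrm{op}}^2 C_{\mathrm{IJ}}\,\delta^2)$, and careful bookkeeping of the constants tightens the prefactor to $2C_{\mathrm{op}}^2 C_{\mathrm{IJ}}$, uniformly over $w\in W_\delta$.

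I expect the main obstacles to be twofold. First, the norm bookkeeping: Condition~\ref{cond:giordano-complexity} and Assumptions~\ref{assump:giordano-boundedavg}–\ref{assump:giordano-weight} are phrased in $\ell_1$ and $\frac{1}{\sqrt{N}}$-normalized $\ell_2$ terms, and converting these into the operator-norm bounds needed to control $F'$ and $H_1^{-1}$ is where the dimension $D$ and the constant $C_{\mathrm{IJ}}$ enter; obtaining the exact constant $2C_{\mathrm{op}}^2C_{\mathrm{IJ}}$ rather than merely an $O(\cdot)$ estimate requires tracking these conversions precisely. Second, ensuring the iterates never leave the region $\|\theta-\hat\theta_1\|_2\le\Delta_\theta$ where the local-smoothness Assumption~\ref{assump:giordano-localsmooth} is valid — this is exactly what the definition $\Delta_\delta = \min\{\Delta_\theta C_{\mathrm{op}}^{-1}, \tfrac{1}{2} C_{\mathrm{IJ}}^{-1}C_{\mathrm{op}}^{-1}\}$ and the hypothesis $\delta\le\Delta_\delta$ are engineered to guarantee, so the argument must verify the self-mapping and contraction properties in a way that is consistent with these constants.
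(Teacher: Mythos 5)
First, a point of context: the paper never proves this statement. Theorem~\ref{thm:giordano-main} is a verbatim restatement of Theorem 1 of \cite{giordano2019swiss}, quoted in the appendix only so that its guarantee can be specialized to the random-design least-squares setting and compared against Theorem~\ref{thm:rif-ns}; the proof lives in that reference. Judged against that original argument, your proposal has the right skeleton --- recognizing $\hat\theta_{\mathrm{IJ}}(w) = F(\hat\theta_1)$ as a single quasi-Newton step, splitting $H_1 - H(\theta,w)$ into a Lipschitz-in-$\theta$ piece and a weight-perturbation piece controlled by Condition~\ref{cond:giordano-complexity}, and obtaining $O(\delta^2)$ as (an $O(\delta)$ Jacobian) $\times$ (an $O(\delta)$ displacement) --- but the Banach fixed-point packaging does not close with the stated constants, and that is a genuine gap.

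Concretely, write $a := D C_w L_h C_{\mathrm{op}}$, so $C_{\mathrm{IJ}} = 1 + a$. On your ball $B_r$ with $r = 2C_{\mathrm{op}}\delta$, your own Jacobian estimate gives contraction factor $\kappa \le C_{\mathrm{op}}\delta(2a+1)$, and the hypothesis $\delta \le \tfrac{1}{2}C_{\mathrm{IJ}}^{-1}C_{\mathrm{op}}^{-1}$ yields only $\kappa \le (2a+1)/(2a+2) = 1 - \tfrac{1}{2C_{\mathrm{IJ}}}$. Self-mapping of $B_r$ needs $\Norm{F(\hat\theta_1) - \hat\theta_1}_2 \le (1-\kappa)r$, i.e.\ $\kappa \le \tfrac12$, which fails whenever $C_{\mathrm{IJ}} > 1$ (moreover $r \le \Delta_\theta$ does not follow: $\delta \le \Delta_\theta C_{\mathrm{op}}^{-1}$ gives only $r \le 2\Delta_\theta$). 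So $\delta \le \Delta_\delta$ is \emph{not} ``exactly calibrated'' for your scheme, and without self-mapping you get neither existence of $\hat\theta(w)$ in $B_r$ nor the a priori bound $\Norm{\hat\theta(w)-\hat\theta_1}_2 \le 2C_{\mathrm{op}}\delta$. What Banach does give is $\Norm{\hat\theta(w)-\hat\theta_1}_2 \le \Norm{F(\hat\theta_1)-\hat\theta_1}_2/(1-\kappa) \le 2C_{\mathrm{op}}C_{\mathrm{IJ}}\delta$, and propagating this weaker displacement through your final step inflates the bound to $2C_{\mathrm{op}}^2 C_{\mathrm{IJ}}(2aC_{\mathrm{IJ}}+1)\delta^2$, worse than the claim by a factor of order $C_{\mathrm{IJ}}^2$; even granting the $2C_{\mathrm{op}}\delta$ displacement, bounding the Jacobian by its worst case on the segment gives $2(2C_{\mathrm{IJ}}-1)C_{\mathrm{op}}^2\delta^2$, double the claimed constant. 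The proof in \cite{giordano2019swiss} avoids both losses with two ingredients your proposal is missing: (i) it obtains the displacement bound not from a contraction but from the exact identity $\hat\theta(w)-\hat\theta_1 = -\bar H(w)^{-1}G(\hat\theta_1,w)$, where $\bar H(w) := \int_0^1 H\bigl(\hat\theta_1 + t(\hat\theta(w)-\hat\theta_1), w\bigr)\,\diff t$, bounding $\OpNorm{\bar H(w)^{-1}} \le 2C_{\mathrm{op}}$ by a Neumann series and breaking the circularity by a continuity (clopen-in-$t$) argument along the path $w_t = \mathbf{1} + t(w-\mathbf{1})$; and (ii) it bounds the Lipschitz part of $\bar H(w) - H_1$ by its \emph{average} over the segment, picking up $\int_0^1 t\,\diff t = \tfrac12$, which is exactly what replaces $2a+1$ by $a+1 = C_{\mathrm{IJ}}$ and yields $\OpNorm{\bar H(w)-H_1} \le C_{\mathrm{IJ}}\delta \le \tfrac12 C_{\mathrm{op}}^{-1}$, hence the exact bound $C_{\mathrm{op}} \cdot C_{\mathrm{IJ}}\delta \cdot 2C_{\mathrm{op}}\delta = 2C_{\mathrm{op}}^2C_{\mathrm{IJ}}\delta^2$. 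As written, your argument establishes the qualitative $O(\delta^2)$ statement under a stronger smallness condition on $\delta$, but not the theorem with its stated hypothesis and constant.
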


\subsubsection{Analysis}

We now analyze the guarantees provided by Giordano et al.~\cite{giordano2019swiss} in the context of our linear regression setting.

In our setup with squared loss and a linear model, the first- and second-order statistics become:
\[
  g_i(\theta) = x_i (y_i - \langle x_i, \theta \rangle),
  \qquad
  h_i(\theta) = x_i x_i^\top.
\]
Note that \( h_i(\theta) \) does not depend on \( \theta \), and thus the local smoothness constant \( L_h \) (Assumption~\ref{assump:giordano-localsmooth}) is zero. Further, the Hessian takes the form
\[
H(\theta, w) = \frac{1}{n} \sum_{i=1}^n w_i x_i x_i^\top,
\]
so assuming the data is appropriately scaled, we expect the spectrum of its Hessian to be somewhat clustered and hence \( C_{\text{op}} = O(1) \) (Assumption~\ref{assump:giordano-nondegen}).

Assumption~\ref{assump:giordano-boundedavg} requires bounds on \( \|g(\theta)\|_2 \) and \( \|h(\theta)\|_2 \). In general, linear regression does not admit uniform convergence over \( \theta \) due to unbounded gradients as \( \theta \to \infty \), but if we fix \( \| \theta \| \) to a moderate scale by limiting the scope of $\Omega_\theta$, we can reasonably assume that \( \| g_i(\theta) \|_2 \approx \sigma \sqrt{d} \), giving \( C_g \approx \sigma \sqrt{d} = O(\sqrt{d})\) and \( C_h \approx d \).

We now turn to Condition~\ref{cond:giordano-complexity}, which controls how large the weighted deviations can be.
In particular, we focus on the second half of this condition, which requires that
\[
\max_{w \in W_\delta} \sup_{\theta \in \Omega_\theta} \left\| \frac{1}{N} \sum_{n=1}^N (w_n - 1) h_n(\theta) \right\|_1 \leq \delta\,.
\]

When removing a set of \( k \) points (i.e., \( w = \vone - \mathbf{1}_T \)), the deviation includes \( k \) terms of magnitude \( \|h_i(\theta)\|_1 \approx d^2 \), resulting in
\[
\left\| \frac{1}{n} \sum (w_i - 1) h_i(\theta) \right\|_1 \approx \frac{k d^2}{n}.
\]
The bound in Theorem~\ref{thm:giordano-main} requires this to be at most \( \Delta_\delta = O(1) \), so we obtain the constraint:
\[
\frac{k d^2}{n} \lesssim 1 \quad \Rightarrow \quad k \lesssim \frac{n}{d^2}.
\]
This represents the main constraint required for Theorem~\ref{thm:giordano-main} to apply.

Finally, recall that in the main result of Theorem~\ref{thm:giordano-main}, the error is bounded by
\[
\mathrm{Err}_{\text{IJ}} = \left\| \hat{\theta}_{\text{IJ}}(w) - \hat{\theta}(w) \right\|_2
\lesssim C_{\text{op}}^2 C_{\text{IJ}} \delta^2.
\]
Given \( \delta \approx \frac{k d^2}{n} \), and \( C_{\text{op}} = C_{\text{IJ}} = O(1) \), we conclude:
\[
\mathrm{Err}_{\text{IJ}} \lesssim \left( \frac{k d^2}{n} \right)^2
= \frac{k^2 d^4}{n^2}\,.
\]

\section{Experimental Details}

We based our experimental design on that of Koh et al.~\cite{Koh2019} who evaluate standard influence functions in a similar setting in order to have a clearer benchmark for comparison.

\subsection{Model Training}
\label{subsec:model_training}
We fit all the logistic regression models using the \texttt{scipy.optimize.minimize} function to train the model using \texttt{L-BFGS-B}, and set a very strict stopping criterion to ensure that we converge to the global optimum and suppress dependencies on the initial weights when using a warm-start retrain.

For the DogFish and Enron datasets also considered by Koh et al., we used the same $L_2$ regularization parameter, and for all new datasets, we set the regularization to $1E-5$.

\subsection{Removal Set Construction}
\label{subsec:removal_set_construction}
Similar to Koh et al., we evaluate our data attribution methods on removals of ``correlated'' sets of samples from every regression.
We focus on relatively fewer sample removals, varying the number of samples linearly along the range from $0.1\%$ to $5\%$ of the training set.
For each dataset and each group construction strategy, we select 40 such sets of samples (1 for each size).

For each such size $k$, we construct removal sets of size $k$ using the following strategies

\begin{enumerate}
    \item Clustered Samples: we construct sets of samples clustered either by a single feature or by $L_2$ distance. When clustering by a single feature, for each set of samples to remove, we select a random sample $i \in [n]$ and a random feature $j \in [d]$, and output the $k$ samples for which $X_{i^\prime, j}$ is closest to $X_{i, j}$. When clustering by $L_2$ distance, we select the center sample $i \in [n]$ uniformly at random and output the $k$ samples closest to it in $L_2$ norm.
    \item Top Percentile Samples: For each of the metrics, we construct a top-percentile set of samples of size $k$, by selecting first selecting the top $2k$ samples and outputting a random subset of half of them. We consider the metrics of: high positive / negative influence on test loss and high positive / negative influence on test predictions, both computed using the standard influence function to keep our benchmark comparable with that of Koh et al.
    \item Random Subsets: $k$ samples selected uniformly at random.
\end{enumerate}

\subsection{Datasets and Embeddings}

We consider several classification tasks in this paper.
For each, we extract features from a particular modality (vision, NLP, or audio), embed them into a $d$-dimensional representation using a frozen pretrained model, and train a logistic regression classifier on a relevant 2-class classification problem.

For the Enron and DogFish datasets, we try to keep to the same conventions as Koh et al.~\cite{Koh2019} for a clean comparison.

\paragraph{ESC-50 embedded using OpenL3}
ESC-50 is a dataset of $\approx 5$ second audio clips each corresponding to one of $50$ categories with $40$ samples from each category~\cite{piczak2015dataset}. We convert this to a 2 class classification problem by dividing the categories into ``natural'' sounds (\textit{breathing}, \textit{cat}, \textit{cow}, etc.) and ``artificial'' sounds (\textit{airplane}, \textit{chainsaw}, \textit{clapping} etc.).

We embed these audio samples using last-layer embeddings of the OpenL3 python library~\cite{cramer2019look}.
This produces $d=512$ dimensional embeddings, and we separate them into train and test samples using a random $80-20$ train-test split.

\paragraph{CIFAR-2 embedded using ResNet-50}

We consider $2$ CIFAR-2 datasets generated by limiting the CIFAR-10 dataset~\cite{krizhevsky2009learning} to 2 classes (Cat vs Dog, and Automobile vs Truck).

The photos from both train and test sets are embedded using the last-layer embeddings of the default pretrained ResNet-50 model in the \texttt{torchvision} python library~\cite{torchvision2016resnet50}.

\paragraph{DogFish embedded with Inception v3}

We reproduce the DogFish dataset from Koh et al.~\cite{Koh2019}.

This dataset contains photos of dogs and fish from the ImageNet dataset~\cite{russakovsky2015imagenet} embedded using frozen last-layer embeddings of the Inception v3 network~\cite{szegedy2016rethinking}.

\paragraph{Enron embedded with Spacy}

We reproduce the Enron dataset from Koh et al.~\cite{Koh2019}.

This NLP dataset consists of Spam vs Ham emails~\cite{metsis2006spam} embedded using a bag-of-words embedding with the \texttt{spacy} python library using the ``en\_core\_web\_sm'' dictionary.
We note that our embeddings for the Enron dataset may differ slightly from those of Koh et al.~\cite{Koh2019}, likely due to version differences in the \texttt{spacy} library. However, our empirical results are consistent with theirs.

\paragraph{IMDB embedded with BERT}

We consider the NLP IMDB sentiment analysis dataset consisting of 50000 movie reviews classified into \textit{positive} and \textit{negative}~\cite{maas2011learning}.
We embed the text reviews using the BERT model~\cite{devlin2019bert}.

\subsection{Experiments}

An implementation of our experiments is available at \href{https://github.com/ittai-rubinstein/rescaled-influence-functions}{github.com/ittai-rubinstein/rescaled-influence-functions}. This appendix provides a concise overview of the procedures implemented in the accompanying code.

\subsubsection{Comparison of Influence and Actual Effect}

To produce Figure~\ref{fig:main-results-empirical}, we select sets of samples to remove based on the methods described in Appendix~\ref{subsec:removal_set_construction}.
For each set of samples we retrain the logistic regression model without these samples to obtain the ground truth effect on the change in the metric $f$, and compare to the application of the same metric $f$ to the models predicted by each of the data attribution techniques.

\paragraph{Removal effect vs influence}

One minor distinction considered in the appendix of Koh et al.~\cite{Koh2019} is between the influence on a metric and the ``parameter influence'' on a metric.
They define the influence on a metric to be the inner product between the gradient of the metric and the estimated change in model parameters
\[
I_{f,w}^\textup{inf} = \iprod{\nabla f, \vtheta_w^\textup{inf} - \vtheta}\,,
\]
and the parameter influence of a set of removals (which we simply call the ``removal effect'') to be
\[
I_{f,w}^\textup{param. inf.} = f\Paren{\vtheta_w^\textup{inf}} - f\Paren{\vtheta} \,.
\]

We use the latter method to produce all the data points in Figures~\ref{fig:main-results-empirical} and~\ref{fig:n-versus-d-empirical} (the metric considered in Figure~\ref{fig:data-poisoning} is linear so it is not affected by this distinction).
However, similar to Koh et al., we observe very little effect to using the linear method instead.

\begin{figure}[t]
  \centering
  \includegraphics[width=\linewidth]{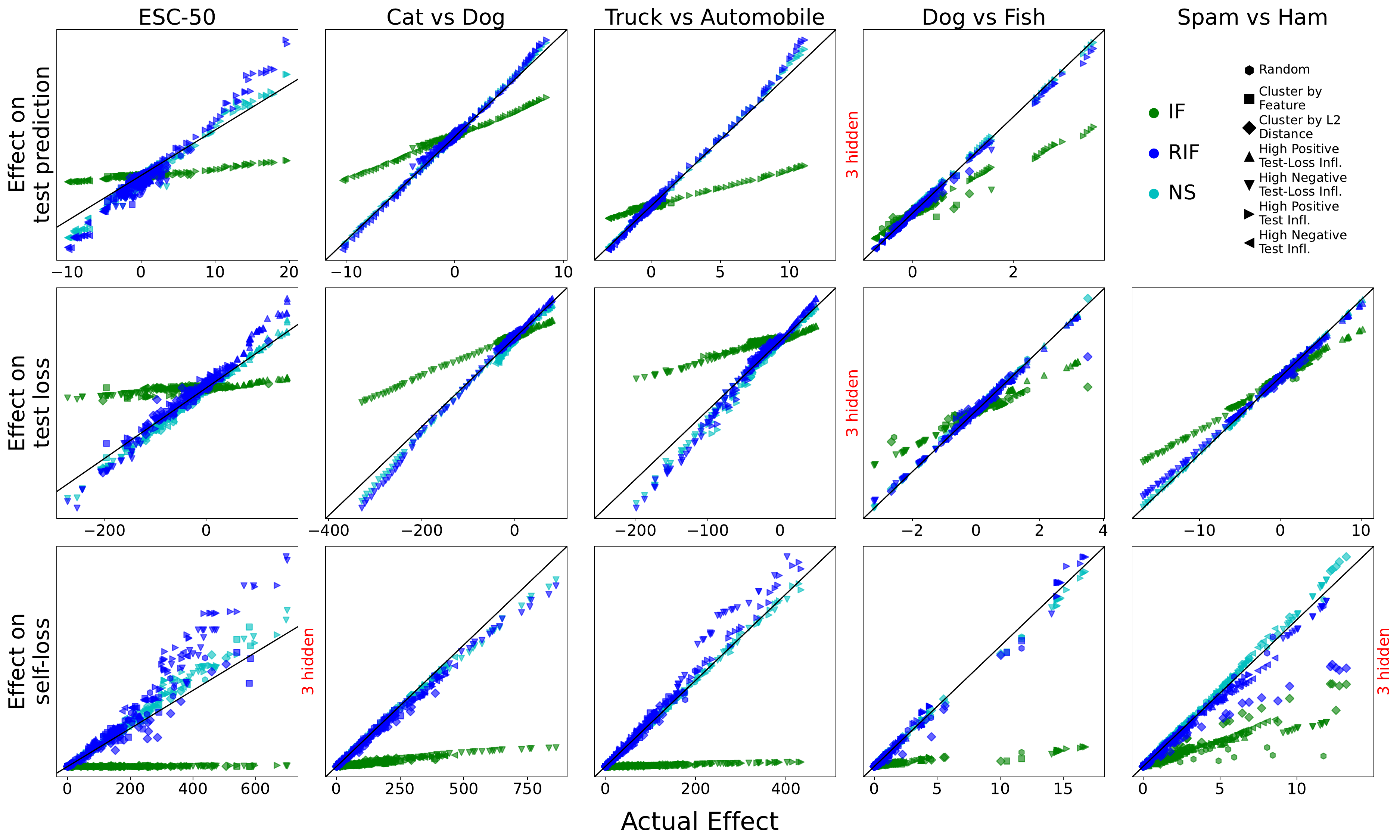}
  \caption{Accuracy of IF versus RIF compared across datasets from image classification (DogFish, Cat vs Dog, Truck vs Automobile), natural language (Spam vs Ham), and audio (ESC-50).
  Each data-point in this experiment is generated as its equivalent in Figure~\ref{fig:main-results-empirical}, except that instead of evaluating the metric $f$ (e.g., test-loss) on the retrained model or the data model prediction of the retrain effect, we use the leading order Taylor approximation of the change in this metric.
  There is no major qualitative difference between the results of this experiment and the ones reported in Figure~\ref{fig:main-results-empirical}, so we decided to keep the original evaluation for a clearer apples-to-apples comparison.}
  \label{fig:linear}
\end{figure}

\subsubsection{Varying \texorpdfstring{$n$}{n} and \texorpdfstring{$\lambda$}{lambda}}

In these experiments we repeated the same experimental procedure as the one used to generate Figure~\ref{fig:main-results-empirical}, but with varying levels of $L_2$ regularization for the DogFish dataset and subsampling the IMDB dataset to different numbers of samples (via uniformly random draws).
We report the effect of these removals on self-loss.

\subsubsection{Data Poisoning}

To ground our results we consider a particular application of data attribution for detecting data poisoning attacks.
We consider the simple data poisoning attack, where an adversary trying to flip our models prediction on some test sample (selected uniformly at random) and adds this sample with a flipped label to the train set.
We then run IF and RIF data attributions on the poisoned dataset and use them to predict the effect of the poisoned sample on its own logit ($z_i = \iprod{\vtheta, \vx_i}$) and compare this to the ground truth of a full retrain.

\subsection{Licensing of External Assets}

We summarize the license information for all datasets and pretrained models used in our experiments. All assets are cited in the main text. 

\begin{table}[H]
\centering
\begin{tabularx}{\textwidth}{p{2.5cm} p{2cm} p{2cm} X}
\toprule
\textbf{Asset} & \textbf{Source} & \textbf{License} & \textbf{Use / Notes} \\
\midrule
ESC-50 & \cite{piczak2015dataset} & CC BY-NC 3.0 & Freely available for non-commercial research use \\
CIFAR-10 & \cite{krizhevsky2009learning} & Not specified & Widely used in academic settings; original authors affiliated with U. of Toronto \\
ImageNet & \cite{russakovsky2015imagenet} & Custom terms & Access requires agreement to ImageNet’s non-commercial license \\
Enron Spam & \cite{metsis2006spam} & Not specified & Used under standard academic fair use; available via public research repositories \\
IMDB Reviews & \cite{maas2011learning} & Not specified & Publicly downloadable from Stanford AI Lab; used for academic research \\
\bottomrule
\end{tabularx}
\vspace{1em}
\caption{License summary for datasets used in our experiments. All assets are cited and used in accordance with their respective terms.}
\label{tab:dataset-licenses}
\end{table}

\begin{table}[H]
\centering
\begin{tabularx}{\textwidth}{p{2cm} p{2cm} p{2cm} X}
\toprule
\textbf{Model} & \textbf{Version} & \textbf{License} & \textbf{Use / Notes} \\
\midrule
OpenL3 & v0.4.2 & MIT & Permissive open-source license; commercial use allowed \\
ResNet-50 (TorchVision) & v0.13.1 & BSD 3-Clause & Standard pretrained model from torchvision; license is permissive, but pretrained weights originate from ImageNet \\
Inception v3 & — & Apache 2.0 & Model license is permissive; weights trained on ImageNet, which restricts downstream use \\
spacy & v3.8.2 & MIT & Freely usable model provided by spaCy; license allows commercial and academic use \\
BERT (Transformers) & bert-base-uncased (v4.36.2) & Apache 2.0 & Hugging Face model with permissive license; trained on BookCorpus and Wikipedia which may have unclear redistribution terms \\
\bottomrule
\end{tabularx}
\vspace{1em}
\caption{License summary for pretrained models and libraries. All tools are used under compatible terms for non-commercial research.}
\label{tab:model-licenses}
\end{table}

\subsubsection*{Notes}
Assets without explicit licenses (e.g., CIFAR-10, Enron, IMDB) are used strictly for non-commercial research purposes. 
We do not redistribute any datasets or pretrained weights.

\end{document}